\newcites{appendix}{References}
\DeclareMathOperator*{\argmin}{argmin\,}
\DeclareMathOperator*{\argmax}{argmax\,}
\theoremstyle{plain}
\newtheorem{theorem}{Theorem}[section]
\newtheorem{proposition}[theorem]{Proposition}
\theoremstyle{definition}
\newtheorem{definition}[theorem]{Definition}
\theoremstyle{remark}
\icmltitlerunning{Beyond Task-Specific Reasoning: A Unified Conditional Generative Framework for Abstract Visual Reasoning}
\begin{document}

\twocolumn[
\icmltitle{Beyond Task-Specific Reasoning: A Unified Conditional Generative Framework for Abstract Visual Reasoning}



\icmlsetsymbol{equal}{*}

\begin{icmlauthorlist}
\icmlauthor{Fan Shi}{comp}
\icmlauthor{Bin Li$^{\ast}$}{comp}
\icmlauthor{Xiangyang Xue}{comp}
\end{icmlauthorlist}

\icmlaffiliation{comp}{Shanghai Key Laboratory of Intelligent Information Processing, School of Computer Science, Fudan University}

\icmlcorrespondingauthor{Bin Li}{libin@fudan.edu.cn}

\icmlkeywords{Machine Learning, ICML}

\vskip 0.3in
]



\printAffiliationsAndNotice{}  

\begin{abstract}
   Abstract visual reasoning (AVR) enables humans to quickly discover and generalize abstract rules to new scenarios. Designing intelligent systems with human-like AVR abilities has been a long-standing topic in the artificial intelligence community. Deep AVR solvers have recently achieved remarkable success in various AVR tasks. However, they usually use task-specific designs or parameters in different tasks. In such a paradigm, solving new tasks often means retraining the model, and sometimes retuning the model architectures, which increases the cost of solving AVR problems. In contrast to task-specific approaches, this paper proposes a novel Unified Conditional Generative Solver (UCGS), aiming to address multiple AVR tasks in a unified framework. First, we prove that some well-known AVR tasks can be reformulated as the problem of estimating the predictability of target images in problem panels. Then, we illustrate that, under the proposed framework, training one conditional generative model can solve various AVR tasks. The experiments show that with a single round of multi-task training, UCGS demonstrates abstract reasoning ability across various AVR tasks. Especially, UCGS exhibits the ability of zero-shot reasoning, enabling it to perform abstract reasoning on problems from unseen AVR tasks in the testing phase.
\end{abstract}

\section{Introduction}
\label{intro}

Abstract visual reasoning (AVR) is a fundamental cognitive ability that enables humans to identify visual concepts, infer underlying abstract rules, and generalize the rules to new scenarios effectively \cite{cattell1963theory, zhuo2021effective, malkinski2022deep}. This ability is pivotal in cognitive activities like decision-making, visual analogy and spatial reasoning \cite{gray2004neurobiology, mccloskey1983intuitive}. Many psychological tests are designed to assess the AVR ability of humans, which often requires the subject to discover specific rules that connect different visual concepts \cite{raven1938raven, mcmillen2007colossal, nie2020bongard}. For example, Raven's Progressive Matrix (RPM) \cite{raven1938raven} focuses on assessing generalization ability, as the tests include unseen combinations of visual concepts and rules, which is a classical intelligence quotient test.

Developing models that reveal human-like AVR ability remains a significant challenge in artificial intelligence research \cite{zheng2019abstract, chollet2019measure, malkinski2022review}. Many AVR benchmarks have been proposed to evaluate intelligent systems in a way that mirrors human-like AVR ability \cite{barrett2018measuring, hill2019learning, zhang2019raven, nie2020bongard}. These benchmarks usually adopt the problem structure of traditional psychological tests and provide programs to generate a large number of problems automatically, \textit{e.g.}, the RAVEN dataset generates RPM problems by combining predefined visual concepts and abstract rules \cite{zhang2019raven}. In recent years, deep AVR solvers have achieved remarkable progress on the proposed AVR benchmarks \cite{barrett2018measuring, zhuo2021effective, mondal2023learning}. However, they often rely on task-specific inductive biases of model architecture and hyperparameters, and solving different tasks typically means training different instances of models. This task-specific paradigm arouses interest in exploring a more unified framework for AVR solvers \cite{webb2024systematic}.

AVR tasks take different structures of problem panels to assess the subject's ability to understand abstract rules. RPM-style tasks \cite{barrett2018measuring, zhang2019raven} require discovering abstract rules of a $3 \times 3$ image matrix and selecting answers from a candidate panel to complete the matrix via analogy. Odd-one-out problems \cite{mcmillen2007colossal} test the ability to identify the image that violates the abstract rules of a sequence. In addition, AVR tasks can be distinguished as generative tasks and selective tasks. Generative tasks require the synthesizing or reconstruction of images \cite{bar2022visual, dedhia2023promptu, bai2023sequential}, while selective tasks provide predefined options for the selection of answers \cite{barrett2018measuring, hill2019learning, nie2020bongard}. These AVR tasks have different structures and goals, posing challenges in proposing a unified problem-solving framework.

Conditional generative models can discover the latent structure of observations, capture complex conditional dependencies between latent factors, and generate outputs conditioned on specific input variables or context \cite{mirza2014conditional, sohn2015learning, chen2016infogan}. This ability is beneficial for AVR tasks that require understanding the abstract rules underlying the problem panels, where the latent structures can represent the abstract rules like shape transformations and spatial arrangements. Conditional generative models offer a powerful tool for solving AVR problems. Previous works have shown that both generative and selective AVR tasks can be tackled via conditional generation \cite{pekar2020generating, shi2021raven, shi2024towards}. The solvers can generate the missing part of the panel and then compare it with the candidates to select the correct answer of an RPM test, which sheds light on unifying selective and generative AVR tasks.

We propose a novel framework, Unified Conditional Generative Solver (UCGS), to address multiple AVR tasks within a unified framework. UCGS frames various AVR tasks as a conditional generation process, which does not need repeated model training or task-specific hyperparameters when solving different AVR tasks. We prove that some classical AVR tasks, including \textit{RPM} \cite{zhang2019raven, barrett2018measuring}, \textit{Visual Analogy Problem (VAP)} \cite{hill2019learning}, \textit{Odd-One-Out (O3)} \cite{mandziuk2019deepiq} and \textit{Synthetic Visual Reasoning Task (SVRT)} \cite{fleuret2011comparing}, can be solved by estimating the probability of generating the target images conditioned on the remaining images within a problem panel. We design a conditional generative network to instantiate the proposed framework, which can infer visual concepts from image patches and conduct abstract visual reasoning in terms of the concepts. The experiments demonstrate that UCGS exhibits not only the abstract reasoning ability on in-distribution tasks, but also the zero-shot reasoning ability to handle unseen tasks during the test phase.

\section{Related Work}

\textbf{Abstract Visual Reasoning.} Abstract visual reasoning (AVR) tasks are taken to measure the ability of abstract rule learning and problem-solving through analogical reasoning. Some early models \cite{lovett2010structure, little2012bayesian} relied on artificially designed features to perform abstract visual reasoning. In recent years, a large number of models based on deep features have emerged \cite{barrett2018measuring, wang2020abstract, yun2020deeper, zhuo2021effective, depeweg2024solving}. Different abstract features, such as hierarchical features \cite{zheng2019abstract, benny2021scale, hu2021stratified}, disentangled representations \cite{van2019disentangled, wu2020scattering}, and object-centric representations \cite{mondal2023learning, webb2024systematic}, have been introduced as cues for abstract reasoning. Some approaches solve RPMs based on neuro-symbolic architectures \cite{hersche2023neuro}. On the other hand, some approaches focus on solving generative AVR tasks \cite{pekar2020generating, shi2021raven, zhang2021abstract, zhang2021learning, shi2023compositional, shi2024towards}. They directly complete problem panels from the given contexts and select answers by comparing the generated results with candidates. Inspired by language-prompted LLMs \cite{brown2020language}, visual in-context learners have been proposed to solve analogical reasoning problems in visual data \cite{bar2022visual, dedhia2023promptu, bai2023sequential, zhao2023mmicl}. Compared with the previous solvers, UCGS focuses on solving AVR tasks in a unified conditional generation framework.

\textbf{Deep Conditional Generative Models.} Deep generative models learn distributions of entire datasets, then they can generate novel samples that do not appear in the training stage \cite{kingma2013auto,goodfellow2014generative,du2019implicit,rombach2022high}. Deep conditional generative models (DCGMs) estimate the conditional probability distribution of the output given a certain input \cite{mirza2014conditional,sohn2015learning,chen2016infogan}. Conditional Generative Adversarial Network (CGAN) \cite{mirza2014conditional} is a classical type of DCGM, where a generator produces data conditioned on the input, and a discriminator distinguishes between the real and generated data. Conditional Variational Autoencoder (CVAE) \cite{sohn2015learning} introduces conditional inputs to conventional VAE \cite{kingma2013auto}, where conditional inputs are encoded with input data into a latent space, and the latent representation is decoded to generate new samples in terms of the conditional inputs. Conditional generative models reveal a deep understanding of complex structures behind conditional inputs and data. Some DCGMs discover the underlying structure behind complex data by predicting the missing part of the data from the given context. Masked autoencoder (MAE) \cite{he2022masked} is given an input where some portions are masked, and the task is to generate the missing pieces. 
The NP family \cite{garnelo2018neural,kim2019attentive,dutordoir2023neural} can simulate a continuous function based on given points. In this paper, UCGS solves different AVR tasks with a unified conditional generative model, which extends the application of DCGMs in learning underlying structures from data.

\begin{figure*}[ht]
   \vskip 0.2in
   \begin{center}
   \centerline{\includegraphics[width=0.95\textwidth]{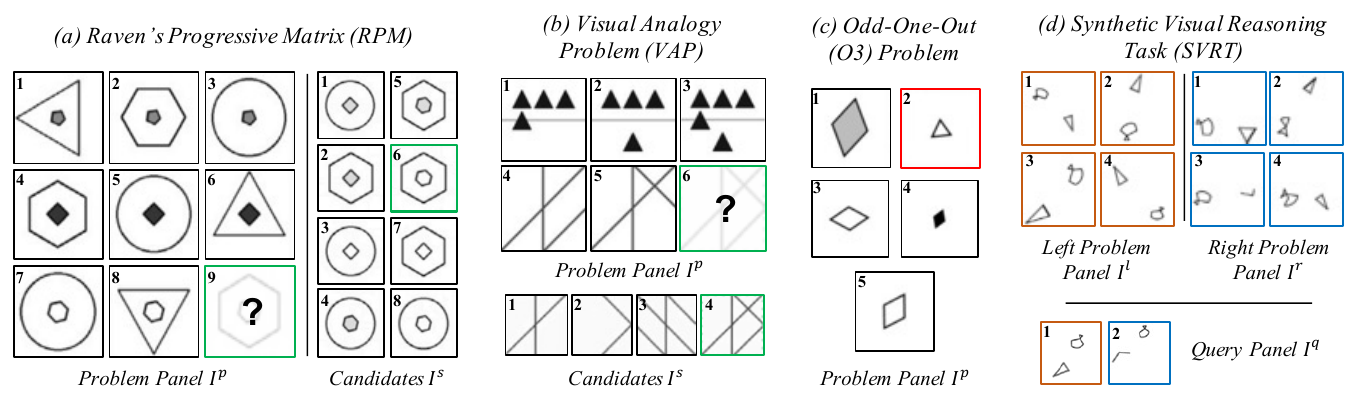}}
   \caption{\textbf{An illustration of abstract visual reasoning (AVR) tasks.} This paper involves four AVR tasks, despite their differences in the predefined visual concepts (\textit{e.g.}, object shape and color) and abstract rules (\textit{e.g.}, progressive change and logical rule), all assess the ability to infer abstract rules from visual stimuli. \textit{(a) Raven's Progressive Matrices (RPMs)} are visual puzzles where participants are given a problem panel with one missing piece. The task is to select the correct image (\textit{e.g.}, the 6th candidate image in the example) from the candidate panel that completes the problem panel. \textit{(b) Visual Analogy Problems (VAPs)} are similar to RPMs. The participant must choose candidate images that complete the analogy and fit the missing piece of problem panels. \textit{(c)} In an \textit{Odd-One-Out (O3) problem}, solvers are given a problem panel to find the odd image that breaks the abstract rule (\textit{e.g.}, the 2nd image in the example panel). \textit{(d)} A problem of \textit{Synthetic Visual Reasoning Task (SVRT)} consists of two problem panels following different abstract rules. The task is to deduce the rule that differentiates the problem panels, and solvers are required to categorize the query image into the left or right panel.}
   \label{fig:avr-task-intro}
   \end{center}
   \vskip -0.2in
\end{figure*}

\section{Methods}

In this section, we illustrate how to solve commonly used AVR tasks through conditional generation and leverage a unified conditional generative network for abstract visual reasoning. We then introduce an instance of UCGS, where the conditional generative network can generate missing panels from context across various tasks.

\subsection{Unified Conditional Generative Solver}

As shown in Figure \ref{fig:avr-task-intro}, the AVR tasks define different visual concepts and the abstract rules on the concepts, which test the understanding of abstract rules in different ways of obtaining answers. We will introduce how to solve different AVR tasks in a unified way as follows.
\begin{definition}
   \label{def:rule}
   $\boldsymbol{I}^p=\{\boldsymbol{I}^p_i|i=1,\dots,N\}$ is a problem panel where $\boldsymbol{I}^p_i$ is the $i$-th panel image of $\boldsymbol{I}^p$. The \textit{correctness} of $\boldsymbol{I}^p$ is the joint probability $p(\boldsymbol{I}^p)$. $\boldsymbol{I}^p$ is \textit{rule-compliant} if $p(\boldsymbol{I}^p)$ is large, and \textit{rule-violating} if $p(\boldsymbol{I}^p) \to 0$.
\end{definition}
By modeling the correctness of a problem panel as a joint probability over panel images, we can define the conditional probability of predicting a specific image in the panel.
\begin{definition}
   \label{def:predictability}
   For panel $\boldsymbol{I}^p$, the \textit{predictability} $p(\boldsymbol{x}|\boldsymbol{I}^p_{\neg i})$ indicates the probability that the $i$-th panel image is $\boldsymbol{x}$ given the context $\boldsymbol{I}^p_{\neg i}$, where $\boldsymbol{I}^p_{\neg i}=\{\boldsymbol{I}^p_j\}_{j \ne i}$.
\end{definition}
Estimating and sampling results from the predictability can solve generative RPM problems, since the problems require solvers to generate answers from the context \cite{pekar2020generating}. But the predictability cannot directly solve selective RPM problems where the solvers must pick answers from candidate panels \cite{zhang2019raven}.
Let $\boldsymbol{I}^p_{i \to \boldsymbol{x}}$ be the panel created by replacing the $i$-th image of the panel $\boldsymbol{I}^p$ to $\boldsymbol{x}$. Using \cref{def:rule,def:predictability}, UCGS can solve the following selective tasks by estimating the predictability.
\begin{proposition}
   \label{prop:rpm}
   Given the problem panel $\boldsymbol{I}^p$ and candidate panel $\boldsymbol{I}^s$ of an RPM test or VAP, where $N=|\boldsymbol{I}^p|$, the correct answer $\boldsymbol{x}^{\star} = \argmax_{\boldsymbol{x} \in \boldsymbol{I}^s} p(\boldsymbol{x}|\boldsymbol{I}^p_{\neg N})$.
\end{proposition}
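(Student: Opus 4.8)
The plan is to reduce the selection rule in the statement to a comparison of panel \emph{correctness} (the joint probability of \cref{def:rule}) and then invoke the way RPM/VAP instances are constructed. First I would fix notation by letting $\boldsymbol{x}^{\star}\in\boldsymbol{I}^s$ denote the genuine answer of the instance, and recalling that in both RPM and VAP the problem panel $\boldsymbol{I}^p$ has exactly one missing cell, here indexed by $N$, so that completing it with a candidate $\boldsymbol{x}$ yields the panel $\boldsymbol{I}^p_{N\to\boldsymbol{x}}$. The central algebraic step is to expand the predictability of \cref{def:predictability} by the chain rule,
\[
p(\boldsymbol{x}\mid\boldsymbol{I}^p_{\neg N})=\frac{p(\boldsymbol{I}^p_{N\to\boldsymbol{x}})}{p(\boldsymbol{I}^p_{\neg N})},
\]
and to observe that the denominator $p(\boldsymbol{I}^p_{\neg N})$ is a normalizing constant that does not depend on the particular candidate $\boldsymbol{x}$. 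Consequently $\argmax_{\boldsymbol{x}\in\boldsymbol{I}^s}p(\boldsymbol{x}\mid\boldsymbol{I}^p_{\neg N})=\argmax_{\boldsymbol{x}\in\boldsymbol{I}^s}p(\boldsymbol{I}^p_{N\to\boldsymbol{x}})$, i.e. selecting the most predictable image is equivalent to selecting the completion of largest correctness.

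Next I would supply the input coming from the design of these benchmarks: a well-posed RPM or VAP instance has a unique rule-compliant completion among the options, namely $\boldsymbol{x}^{\star}$, while every distractor yields a panel that violates the governing rules. By \cref{def:rule} this means $p(\boldsymbol{I}^p_{N\to\boldsymbol{x}^{\star}})$ is large whereas $p(\boldsymbol{I}^p_{N\to\boldsymbol{x}})\to 0$ for each $\boldsymbol{x}\in\boldsymbol{I}^s\setminus\{\boldsymbol{x}^{\star}\}$. Since $\boldsymbol{I}^s$ is finite, the joint probability $p(\boldsymbol{I}^p_{N\to\boldsymbol{x}})$ over the candidate set is therefore maximized uniquely at $\boldsymbol{x}=\boldsymbol{x}^{\star}$. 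Chaining this with the equivalence of the previous paragraph gives $\boldsymbol{x}^{\star}=\argmax_{\boldsymbol{x}\in\boldsymbol{I}^s}p(\boldsymbol{x}\mid\boldsymbol{I}^p_{\neg N})$, which is the assertion. For VAP the argument is word-for-word the same; one only notes that ``rule-compliant'' there refers to the analogical relation linking the source and target pairs rather than the $3\times 3$ matrix rules of RPM, but the definition-level properties used above are unaffected.

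I expect the main obstacle to be conceptual rather than computational: the proposition is essentially a consequence of the modeling choices in \cref{def:rule,def:predictability}, so the real content is making precise (i) that the data-generating process of these benchmarks guarantees a \emph{unique} rule-compliant completion among the provided options — otherwise a second compliant distractor would tie with $\boldsymbol{x}^{\star}$ and the $\argmax$ need not be a singleton — and (ii) that the Bayes normalization is taken with respect to the same reference measure implicitly used when asserting $p(\boldsymbol{I}^p_{N\to\boldsymbol{x}})\to 0$, so that ``$\argmax$ of the conditional'' and ``$\argmax$ of the joint'' genuinely coincide over $\boldsymbol{I}^s$. I would therefore state these as the standing well-posedness assumption on AVR instances and keep the derivation itself short.
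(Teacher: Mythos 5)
Your argument is essentially the paper's own proof: both factor the correctness of the completed panel as $p(\boldsymbol{I}^p_{N\to\boldsymbol{x}})=p(\boldsymbol{x}\mid\boldsymbol{I}^p_{\neg N})\,p(\boldsymbol{I}^p_{\neg N})$, note that the second factor is constant over candidates, and conclude that maximizing predictability over $\boldsymbol{I}^s$ maximizes correctness, which the correct answer attains. The only difference is that you state explicitly the well-posedness assumption (a unique rule-compliant completion among the candidates) that the paper leaves implicit in its definitions and benchmark design, which is a reasonable clarification but not a different route.
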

\begin{proof} 
   The context panel $\boldsymbol{I}^p_{\neg N}$ is created by removing the last image from $\boldsymbol{I}^p$. The modified panels $\{\boldsymbol{I}^p_{N \to \boldsymbol{x}}|\boldsymbol{x} \in \boldsymbol{I}^s \}$ are created by placing the candidate images to the removed position. The correctness of $\boldsymbol{I}^p_{N \to \boldsymbol{x}}$ is
   \begin{equation}
      \begin{aligned}
         & p(\boldsymbol{I}^p_{N \to \boldsymbol{x}}) = p(\boldsymbol{x}|\boldsymbol{I}^p_{\neg N}) p(\boldsymbol{I}^p_{\neg N}) \propto p(\boldsymbol{x}|\boldsymbol{I}^p_{\neg N}).
      \end{aligned}
      \label{eq:rpm-proof}
   \end{equation}
   The correct answer $\boldsymbol{x}^{\star}$ make up the modified panel with the largest correctness, given by $\argmax_{\boldsymbol{x} \in \boldsymbol{I}^s} p(\boldsymbol{x}|\boldsymbol{I}^p_{\neg N})$.
\end{proof}
\begin{proposition}
   \label{prop:ood}
   Given the panel $\boldsymbol{I}^p$ of an O3 problem, the index of the odd image $i^{\star} = \argmin_{i=1,\cdots,N} p(\boldsymbol{I}^p_i|\boldsymbol{I}^p_{\neg i})$, where $N=|\boldsymbol{I}^p|$.
\end{proposition}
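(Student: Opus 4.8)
The plan is to reduce the $\argmin$ over predictabilities to an $\argmax$ over the correctness of the sub-panels obtained by deleting one image, and then to invoke the defining property of an O3 instance. Write $i^{\star}$ for the index of the odd image, which by the definition of the task is the unique index such that $\boldsymbol{I}^p_{\neg i^{\star}}$ obeys the shared abstract rule: $p(\boldsymbol{I}^p_{\neg i^{\star}})$ is large, while for every $i \ne i^{\star}$ the sub-panel $\boldsymbol{I}^p_{\neg i}$ still contains the odd image and is therefore rule-violating, i.e. $p(\boldsymbol{I}^p_{\neg i}) \to 0$; in particular the full panel $\boldsymbol{I}^p$ is rule-violating as well.

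The key algebraic step is the chain-rule identity already used in the proof of \cref{prop:rpm}: since $\{\boldsymbol{I}^p_i\} \cup \boldsymbol{I}^p_{\neg i} = \boldsymbol{I}^p$,
\begin{equation}
   p(\boldsymbol{I}^p_i | \boldsymbol{I}^p_{\neg i}) = \frac{p(\boldsymbol{I}^p)}{p(\boldsymbol{I}^p_{\neg i})}.
\end{equation}
The numerator $p(\boldsymbol{I}^p)$ is independent of $i$, so minimizing the left-hand side over $i$ is equivalent to maximizing $p(\boldsymbol{I}^p_{\neg i})$ over $i$. By the previous paragraph that maximizer is exactly $i^{\star}$, hence $\argmin_{i=1,\cdots,N} p(\boldsymbol{I}^p_i | \boldsymbol{I}^p_{\neg i}) = i^{\star}$, which is the claim. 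Intuitively, deleting the odd image is the only deletion that repairs the panel, and a repairable context makes the deleted image easy to predict only when that image was the culprit.

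I expect the single real subtlety to be the handling of the expression ``$\to 0$'' in \cref{def:rule}: if the correctness of a rule-violating panel is literally $0$, then $p(\boldsymbol{I}^p) = 0$ and the displayed ratio is $0/0$ for the non-odd indices. I would resolve this exactly as the definition hints, reading correctness as a strictly positive quantity whose rule-violating values can be made arbitrarily small (parametrized, say, by how sharply the rule is enforced or by the sampling temperature of the panel generator), so that the identity is well defined and the inequality $p(\boldsymbol{I}^p_{\neg i^{\star}}) > p(\boldsymbol{I}^p_{\neg i})$ holds for all sufficiently sharp instances. I would also make explicit the mild well-posedness assumption that the odd index is unique, since that is what guarantees a single minimizer; beyond this, no heavier machinery appears to be needed.
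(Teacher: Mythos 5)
Your proposal is correct and follows essentially the same route as the paper's proof: the same chain-rule identity $p(\boldsymbol{I}^p_i|\boldsymbol{I}^p_{\neg i}) = p(\boldsymbol{I}^p)/p(\boldsymbol{I}^p_{\neg i})$, followed by the observation that only deleting the odd image yields a rule-compliant (high-correctness) sub-panel, so the $\argmin$ over predictabilities picks out $i^{\star}$. Your added remarks on reading ``$\to 0$'' as strictly positive but arbitrarily small (avoiding a $0/0$ ratio) and on the uniqueness of the odd index are reasonable tightenings of the same argument, not a different approach.
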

\begin{proof} 
   We create context panels $\{\boldsymbol{I}^p_{\neg i}|i=1,\cdots,N\}$ by removing each image from $\boldsymbol{I}^p$. The correctness of $\boldsymbol{I}^p_{\neg i}$ is
   \begin{equation}
      \begin{aligned}
         & p(\boldsymbol{I}^p_{\neg i}) = \frac{p(\boldsymbol{I}^p)}{p(\boldsymbol{I}^p_{i}|\boldsymbol{I}^p_{\neg i})} \propto \frac{1}{p(\boldsymbol{I}^p_{i}|\boldsymbol{I}^p_{\neg i})}.
      \end{aligned}
   \end{equation}
   If $\boldsymbol{I}^p_i$ is the odd image, $p(\boldsymbol{I}^p_{\neg i})$ will be large since the rule-breaking image is removed, otherwise $\boldsymbol{I}^p_{\neg i}$ is still rule-violating and $p(\boldsymbol{I}^p_{\neg i}) \to 0$. Hence the index of the odd image is $i^{\star} = \argmin_{i=1,\cdots,N} p(\boldsymbol{I}^p_{i}|\boldsymbol{I}^p_{\neg i})$.
\end{proof}
\begin{proposition}
   \label{prop:bp}
   Given a left problem panel $\boldsymbol{I}^{l}$, a right problem panel $\boldsymbol{I}^{r}$ and a query panel $\boldsymbol{I}^q$ of SVRT, where $N=|\boldsymbol{I}^{l}|=|\boldsymbol{I}^{r}|$, the query image $\boldsymbol{x} \in \boldsymbol{I}^q$ belongs to the panel $\boldsymbol{I}^{\star} = \argmax_{\boldsymbol{I}^{p} \in \{\boldsymbol{I}^{l}, \boldsymbol{I}^{r}\}} p(\boldsymbol{x}|\boldsymbol{I}^p)$.
\end{proposition}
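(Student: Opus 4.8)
The plan is to reuse the template of Propositions~\ref{prop:rpm} and \ref{prop:ood}: convert the binary classification question into a comparison of panel \emph{correctness} in the sense of \cref{def:rule}. For each side $\boldsymbol{I}^p \in \{\boldsymbol{I}^l, \boldsymbol{I}^r\}$, I would form the augmented panel $\boldsymbol{I}^p_{+\boldsymbol{x}}$ obtained by appending the query image $\boldsymbol{x}$ as a new $(N{+}1)$-th image, so that $\boldsymbol{I}^p$ is exactly the context of $\boldsymbol{x}$ inside $\boldsymbol{I}^p_{+\boldsymbol{x}}$. The chain rule then gives
\begin{equation}
   p(\boldsymbol{I}^p_{+\boldsymbol{x}}) = p(\boldsymbol{x} \mid \boldsymbol{I}^p)\, p(\boldsymbol{I}^p).
\end{equation}
The two things to establish are: (i) $\boldsymbol{x}$ belongs to the side whose augmented panel is more rule-compliant, i.e. has the larger $p(\boldsymbol{I}^p_{+\boldsymbol{x}})$; and (ii) maximizing $p(\boldsymbol{I}^p_{+\boldsymbol{x}})$ over the two sides is equivalent to maximizing the predictability $p(\boldsymbol{x}\mid\boldsymbol{I}^p)$ of \cref{def:predictability}.

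For step (i) I would invoke the structure of SVRT: the left and right panels are generated under a rule and its mutually exclusive counterpart, so the query image $\boldsymbol{x}$ satisfies exactly one of the two rules. If $\boldsymbol{x}$ obeys the left rule, then $\boldsymbol{I}^l_{+\boldsymbol{x}}$ remains rule-compliant and $p(\boldsymbol{I}^l_{+\boldsymbol{x}})$ is large, whereas $\boldsymbol{I}^r_{+\boldsymbol{x}}$ now contains an image that breaks the right rule, making it rule-violating with $p(\boldsymbol{I}^r_{+\boldsymbol{x}}) \to 0$ — this is the same mechanism as the odd image in Proposition~\ref{prop:ood}. The roles swap when $\boldsymbol{x}$ obeys the right rule. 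Hence $\boldsymbol{x}$ belongs to $\argmax_{\boldsymbol{I}^p \in \{\boldsymbol{I}^l,\boldsymbol{I}^r\}} p(\boldsymbol{I}^p_{+\boldsymbol{x}})$.

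For step (ii) I would use that $\boldsymbol{I}^l$ and $\boldsymbol{I}^r$ are both presented as valid problem panels, each internally consistent with its own rule, so by \cref{def:rule} $p(\boldsymbol{I}^l)$ and $p(\boldsymbol{I}^r)$ are both large; since the two panels play symmetric roles in the task, I take $p(\boldsymbol{I}^l) \approx p(\boldsymbol{I}^r)$ and treat this common value as a constant factor, yielding $p(\boldsymbol{I}^p_{+\boldsymbol{x}}) \propto p(\boldsymbol{x}\mid\boldsymbol{I}^p)$ across $\boldsymbol{I}^p \in \{\boldsymbol{I}^l,\boldsymbol{I}^r\}$. Combining this with step (i), $\boldsymbol{x}$ belongs to $\boldsymbol{I}^{\star} = \argmax_{\boldsymbol{I}^{p} \in \{\boldsymbol{I}^{l}, \boldsymbol{I}^{r}\}} p(\boldsymbol{x}\mid\boldsymbol{I}^p)$, as claimed.

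The main obstacle is step (ii). Unlike in Propositions~\ref{prop:rpm} and \ref{prop:ood}, where the factor dropped from the proportionality ($p(\boldsymbol{I}^p_{\neg N})$, resp. $p(\boldsymbol{I}^p)$) is genuinely the same for every option, here the two options are \emph{different} conditioning panels, so the dropped factor $p(\boldsymbol{I}^p)$ does change with the choice. The argument therefore rests on the modeling assumption that the two given SVRT panels are equally rule-compliant, which I would state explicitly (as a property of how SVRT instances are constructed — both panels drawn by the same generator under complementary rules) rather than letting it pass silently as in the cleaner RPM and O3 cases.
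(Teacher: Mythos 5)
Your proposal is correct and follows essentially the same route as the paper: append $\boldsymbol{x}$ to each problem panel, factorize $p(\boldsymbol{I}^p_{+\boldsymbol{x}}) = p(\boldsymbol{x}|\boldsymbol{I}^p)\,p(\boldsymbol{I}^p)$, argue via rule-compliance that the correct side's extended panel has the largest correctness, and then equate the two prior factors so the comparison reduces to the predictability. The obstacle you flag in step (ii) is exactly the point the paper addresses, and it does so with the same kind of assumption you propose, just phrased concretely: $\boldsymbol{I}^l$ and $\boldsymbol{I}^r$ are sampled uniformly without overlap from the dataset $\mathcal{D}$, so $p(\boldsymbol{I}^l)=p(\boldsymbol{I}^r)=1/|\mathcal{D}|$, which makes the dropped factor genuinely constant rather than only approximately equal.
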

\begin{proof} 
   By appending the query image $\boldsymbol{x} \in \boldsymbol{I}^q$ to the end of the problem panels $\boldsymbol{I}^l$ and $\boldsymbol{I}^r$, we get the extended panels $\boldsymbol{I}^l_{+\boldsymbol{x}}$ and $\boldsymbol{I}^r_{+\boldsymbol{x}}$ . The correctness of the extended panels is
   \begin{equation}
      \begin{aligned}
         p(\boldsymbol{I}^l_{+\boldsymbol{x}}) = p(\boldsymbol{x}|\boldsymbol{I}^l) p(\boldsymbol{I}^l), \\
         p(\boldsymbol{I}^r_{+\boldsymbol{x}}) = p(\boldsymbol{x}|\boldsymbol{I}^r) p(\boldsymbol{I}^r).
      \end{aligned}
      \label{eq:bp-proof}
   \end{equation}
   If $\boldsymbol{x}$ belongs to $\boldsymbol{I}^p$, the extended panel $\boldsymbol{I}^p_{+\boldsymbol{x}}$ remains rule-compliant because the panel rule is not changed. Therefore, $\boldsymbol{I}^p_{+\boldsymbol{x}}$ has the largest correctness. Consider that $\boldsymbol{I}^l$ and $\boldsymbol{I}^r$ are sampled from the dataset $\mathcal{D}$ uniformly without overlap, in Equation \ref{eq:bp-proof} we have $p(\boldsymbol{I}^l)=p(\boldsymbol{I}^r)=1/|\mathcal{D}|$. Therefore, $\boldsymbol{x}$ belongs to the panel $\boldsymbol{I}^{\star} = \argmax_{\boldsymbol{I}^{p} \in \{\boldsymbol{I}^{l}, \boldsymbol{I}^{r}\}} p(\boldsymbol{x}|\boldsymbol{I}^p)$.
\end{proof}
Propositions \ref{prop:rpm}, \ref{prop:ood}, and \ref{prop:bp} demonstrate that although the answers to AVR problems have different forms, they can be converted into the estimation of predictability on panel images. In this way, answer selection and generation are unified. If the $i$-th image of an RPM problem panel is missing, we can generate answers by sampling from $p(\boldsymbol{x}|\boldsymbol{I}^p_{\neg i})$, as well as select answers by filling in the candidate $\boldsymbol{x}$ and computing the panel correctness $p(\boldsymbol{I}^p_{i \to \boldsymbol{x}}) = p(\boldsymbol{x}|\boldsymbol{I}^p_{\neg i}) p(\boldsymbol{I}^p_{\neg i})$. With a unified predictability estimator, UCGS can solve different AVR problems after a round of multi-task training, preventing repeated training or tuning of AVR solvers. The core of UCGS is a shared predictability estimator and training-free judgment functions. The predictability estimator is a conditional generative network trained on different AVR tasks to estimate the predictability of panel images. As shown in Propositions \ref{prop:rpm}, \ref{prop:ood}, and \ref{prop:bp}, the judgment functions transform outputs of the predictability estimator to obtain the final results, which are task-specific. In the next sections, we provide an instance of UCGS and introduce the architecture of the conditional generative network in detail.

\subsection{Instantiation of UCGS}

\begin{figure*}[ht]
   \vskip 0.2in
   \begin{center}
   \centerline{\includegraphics[width=\textwidth]{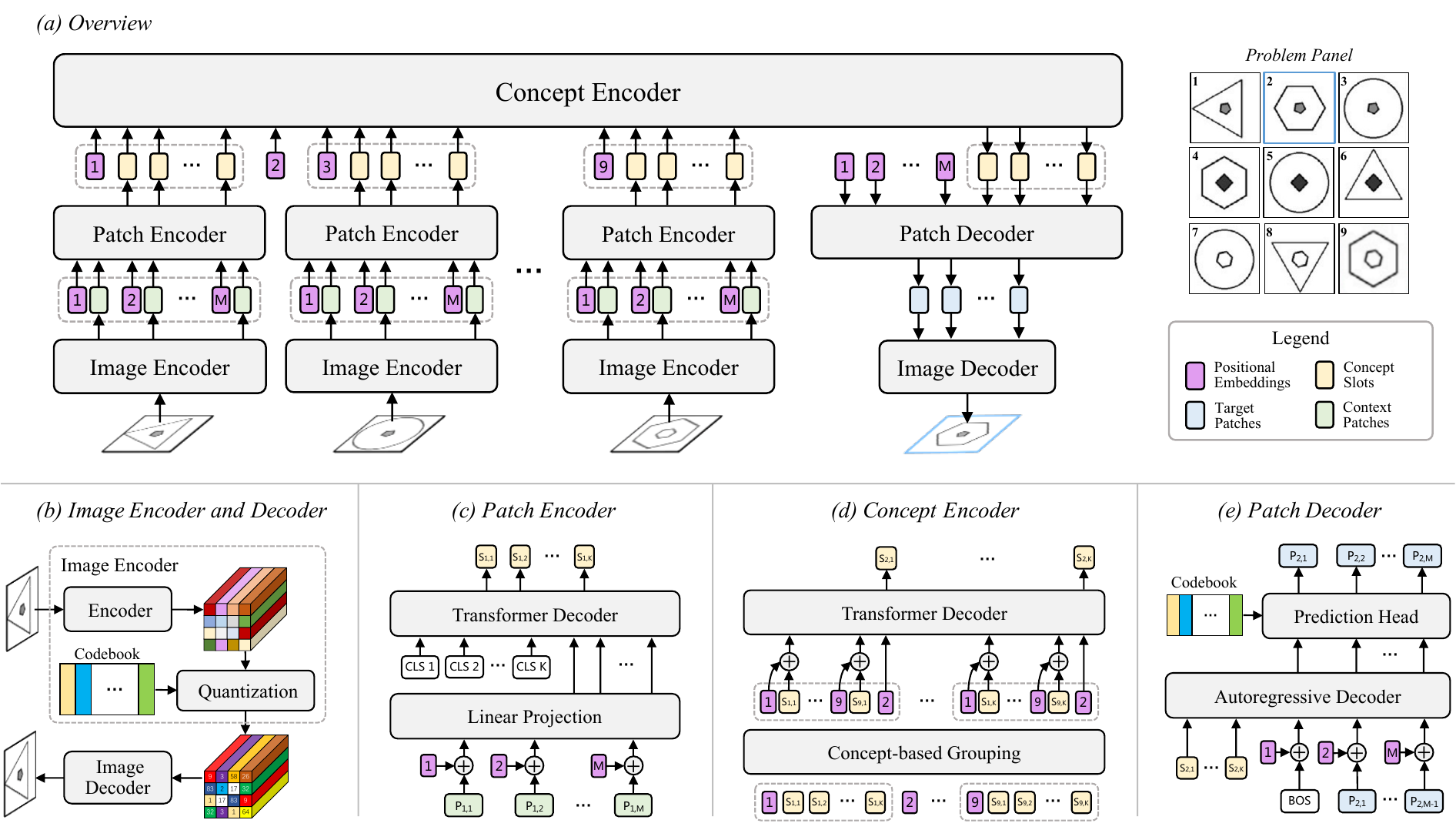}}
   \caption{\textbf{An Overview of UCGS-T.} The image encoder extracts high-level features from each context panel image, which are mapped into discrete context patch representations via vector quantization. The patch encoder captures image-level visual concepts from context patches that encode local information. The panel encoder integrates visual concepts of the context images, understanding abstract rules on the panel, and predicting visual concepts for the target image. The patch decoder generates patch representations of the target image from the target visual concepts predicted by the panel encoder. The image decoder reconstructs the target image from the target patches.}
   \label{fig:model-overview}
   \end{center}
   \vskip -0.2in
\end{figure*}

We instantiate UCGS with a Transformer-based conditional generative network, called UCGS-T. Figure \ref{fig:model-overview}a illustrates the architecture of UCGS-T via an RPM problem. The problem panel of an RPM contains $N=9$ images, where we leave one as the prediction target and the others as the context. We denote the index of the target image as $t$, and the indices of the context images as a set $C$. The objective of UCGS-T is to estimate the predictability $p(\boldsymbol{I}^p_{t}|\boldsymbol{I}^p_{C})$. UCGS-T consists of five modules, which will be introduced as follows.

\subsubsection{Image Encoder and Decoder}

Figure \ref{fig:model-overview}b depicts the architecture of the image encoder and decoder. UCGS-T learns patch representations of the input image by mapping continuous representations into a finite set of discrete codes, which is then decoded back into the original data space \cite{van2017neural,razavi2019generating}. The image encoder extracts a feature map with $M$ visual features from $\boldsymbol{I}^p_i$ using a CNN-based neural network, which are projected to the nearest vectors in a fixed-size codebook $\boldsymbol{e}=(\boldsymbol{e}_1, \boldsymbol{e}_2, \dots, \boldsymbol{e}_L)$ to obtain $M$ quantized patch representations $\boldsymbol{Z}_i=\{\boldsymbol{Z}_{i,1}, \boldsymbol{Z}_{i,2},\dots, \boldsymbol{Z}_{i,M}\}$. The image decoder takes the quantized patch representations as input and reconstructs the image from the discrete representation space. The input images are compressed into quantized patch representations to learn more abstract and general representations for reasoning. We can estimate the predictability $p(\boldsymbol{I}^p_{t}|\boldsymbol{I}^p_{C})$ over the high-dimensional images through $p(\boldsymbol{Z}_{t}|\boldsymbol{Z}_{C})$ on the patch representations to reduce the complexity of reasoning.

\subsubsection{Patch Encoder}

The quantized patch representations focus on local regions of the image, potentially corresponding to a specific geometric shape or pattern learned by the codebook. As shown in Figure \ref{fig:model-overview}c, the patch encoder computes the relation between the local patch representations to understand the role of each patch in the overall image and extract the image-level visual concepts (\textit{e.g.}, the number of entities in the image). We add $M$ learnable positional embeddings to the patches of each panel image, which is projected to the high-dimensional representations $\boldsymbol{\tilde{Z}}$ by a linear network to encode spatial position information, where $\boldsymbol{\tilde{Z}}_{i,m}$ is the position-augmented representations of $\boldsymbol{Z}_{i,m}$ for $i=1,\dots,N$ and $m=1,\dots,M$.
$\boldsymbol{\tilde{Z}}_i$ are passed through a Transformer decoder to capture global dependencies and relationships between the patches. In addition to the patch representations, we introduce class tokens $\{\text{CLS}_1,\dots,\text{CLS}_K\}$ as learnable slots to aggregate global information from the patches and obtain the visual concepts $\boldsymbol{S}_{i}=\{\boldsymbol{S}_{i,k}|k=1,\dots,K\}$ where
\begin{equation}
   \boldsymbol{S}_{i,k} = \text{TransformerDecoder}(\text{CLS}_k, \boldsymbol{\tilde{Z}}_i).
\end{equation}
The independent slots capture $K$ visual concepts from the patches. The Transformer decoder updates the slots based on attention mechanisms, where the slots act as queries and the patch representations $\boldsymbol{\tilde{Z}}_i$ are keys and values. The final output of the patch encoder is the visual concepts $\boldsymbol{S}_i$ that consider relationships across the patches of the entire image.

\subsubsection{Concept Encoder}

The concept encoder combines the visual concepts of the context images (\textit{i.e.}, context concepts) $\boldsymbol{S}_C = \{\boldsymbol{S}_i | i \in C\}$, understanding the overall layout and abstract rule within the panel, and predicting the visual concepts of the target image (\textit{i.e.}, target concepts) $\boldsymbol{S}_t$. Figure \ref{fig:model-overview}d displays the architecture of the concept encoder. First, the input context concepts $\boldsymbol{S}_C$ are reorganized into $K$ groups $\{\boldsymbol{G}_1,\dots,\boldsymbol{G}_K\}$ in terms of the concepts, where the $k$-th group $\boldsymbol{G}_k = \left\{\boldsymbol{S}_{i,k} | i \in C\right\}$. We group the context concepts since the abstract rules are typically global and shared among different visual concepts \cite{wu2020scattering, shi2024towards}, \textit{e.g.}, the tuples \textit{(Triangle, Square, Pentagon)} and \textit{(Small, Middle, Large)} point to the same abstract rule \textit{Increase}. To encode the position of concepts within the group $\boldsymbol{G}_k$, learnable positional embeddings are added to the visual concepts in $\boldsymbol{G}_k$, followed by a linear projection to produce the position-augmented representations $\boldsymbol{\tilde{G}}_{k}$. Each group is processed independently by a shared Transformer decoder to capture concept-specific abstract rules and predict the target visual concepts $\boldsymbol{S}_{t}=\{\boldsymbol{S}_{t,k}|k=1,\dots,K\}$ where
\begin{equation}
   \boldsymbol{S}_{t,k} = \text{TransformerDecoder}( \text{PE}_t, \boldsymbol{\tilde{G}}_{k}).
\end{equation}
$\text{PE}_t$ is a learnable target positional embedding. The output $\boldsymbol{S}_{t}$ is used to reconstruct the patches of the target image.

\subsubsection{Patch Decoder}

The patch decoder in Figure \ref{fig:model-overview}e generates target patches from the target visual concepts $\boldsymbol{S}_{t}$ in an autoregressive manner \cite{yan2021videogpt}. The predictability is factorized by
\begin{equation}
    p(\boldsymbol{Z}_{t}|\boldsymbol{Z}_{C}) = \prod_{m=1}^M p(\boldsymbol{Z}_{t,m}|\boldsymbol{Z}_{t,<m},\boldsymbol{Z}_{C})
\end{equation}
where a learnable token $\boldsymbol{Z}_{t,0}$ is used to indicate the beginning of the autoregressive decoding process. The process generates the patch sequence one step at a time, where the $m$-th step depends on the previously generated patches $\boldsymbol{Z}_{t,<m}$ and the target concepts $\boldsymbol{S}_{t}$ computed from $\boldsymbol{Z}_{C}$. The $m$-th step $p(\boldsymbol{Z}_{t,m}|\boldsymbol{Z}_{t,<m},\boldsymbol{Z}_{C})$ is parameterized by a Transformer decoder with a prediction head:
\begin{equation}
   \begin{gathered}
      \boldsymbol{\tilde{Z}}_{t,<m} = \text{PostionalEmbedding}\left( \boldsymbol{Z}_{t,<m}\right), \\
      \boldsymbol{H} = \text{TransformerDecoder}\left( \boldsymbol{\tilde{Z}}_{t,<m}, \boldsymbol{S}_{t} \right), \\
      \boldsymbol{\pi}_{m} = \text{Softmax}\left(\text{Linear}\left( \boldsymbol{H}_{m} \right)\right), \\
      \boldsymbol{Z}_{t,m} = \boldsymbol{e}\left[c_m\right], \text{ where } c_m \sim \text{Categorical}\left(\boldsymbol{\pi}_{m}\right).
   \end{gathered}
\end{equation}
The output of the Transformer decoder is passed through a prediction head, which maps the hidden representations into the probabilities over the discrete codes in the codebook. The final prediction of the target patch $\boldsymbol{Z}_{t,m}$ is given by querying the $c_m$-th vector $\boldsymbol{e}[c_m]$ in the codebook.

\subsubsection{Model Training}

The total loss of the model consists of an image reconstruction loss $\mathcal{L}_{\text{recon}}$ and a patch prediction loss $\mathcal{L}_{\text{pred}}$. We adopt the training objective of VQVAE \cite{van2017neural} as the image reconstruction loss to ensure that the input image can be correctly reconstructed from its quantized discrete representations. $\mathcal{L}_{\text{recon}}$ is defined as the distance between the input images and the reconstructions with a commitment loss to ensure the continuous representations are not too far from the codebook vectors. The patch prediction loss evaluates the difference between the predicted target patches (discrete codes) and the target patches encoded from the image encoder. $\mathcal{L}_{\text{pred}}$ is defined as
\begin{equation}
   \mathcal{L}_{\text{pred}} = -\sum_{m=1}^M \log p(\boldsymbol{Z}_{t,m}|\boldsymbol{Z}_{t,<m},\boldsymbol{Z}_{C}).
\end{equation}
By combining the image reconstruction loss and patch prediction loss, the total loss is given as
\begin{equation}
\mathcal{L}_{\text{total}} = \mathcal{L}_{\text{pred}} + \lambda \cdot\mathcal{L}_{\text{recon}}
\end{equation}
where $\lambda$ is a hyperparameter that balances the losses. The image encoder and decoder are pretrained before training the remaining modules. We can set $\lambda > 0$ to finetune the image encoder and decoder in the following training stage. But we find that freezing their parameters is the best choice. 

\section{Experiments}

In this section, we compare UCGS-T with ablation baselines, task-specific solvers and Multimodal LLMs (MLLMs) on four classical AVR tasks shown in Figure \ref{fig:avr-task-intro}. We also conduct qualitative experiments to illustrate the performance of UCGS-T in answer generation tasks by visualizing the generation results of RAVEN and PGM. 

\textbf{Datasets.} The models are evaluated on RAVEN \cite{zhang2019raven} and PGM \cite{barrett2018measuring} to assess their reasoning abilities on RPM tasks. The ability to reason over O3 problems is tested using the G1-set dataset \cite{mandziuk2019deepiq}. Performance on the remaining AVR tasks is evaluated using the VAP \cite{hilllearning} and SVRT \cite{fleuret2011comparing} datasets. RAVEN and PGM are used for both training and testing, while G1-set, VAP, and SVRT are used only during testing to assess the zero-shot reasoning capabilities of the models. In addition to the public datasets, we construct three new datasets, \textit{i.e.}, O3-ID, VAP-ID, and SVRT-ID, based on RAVEN. These datasets are designed to evaluate the ability to reason on unseen AVR tasks that share in-distribution visual concepts and abstract rules with the training data. See Appendix \ref{sec:appendix-dataset} for more details on the dataset construction.

\textbf{Metric.} We compute the selection accuracy, a standard metric that evaluates model performance on AVR tasks \cite{barrett2018measuring}. This metric is applicable to both selective and generative solvers. For selective solvers, which predict the index of answers, selection accuracy can be calculated straightforwardly. Although candidate panels guarantee a unique correct answer, many AVR problems admit multiple valid solutions, making it challenging to enumerate all possible correct answers when evaluating generative solvers \cite{shi2024towards}. We consider the output of a generative solver to be correct if it is the closest match to the ground truth answer among the candidate panels. Accordingly, both selective and generative solvers are evaluated using selection accuracy in our experiments.

\begin{table*}[t]
   \caption{\textbf{Selection accuracy (\%) on AVR tasks.} ID tasks have only in-distribution visual concepts and abstract rules, while OOD tasks contain out-of-distribution visual concepts and abstract rules. The problems of zero-shot tasks only appear in the testing stage. The models marked with * are evaluated on subsets of the datasets.}
   \label{tab:acc}
   \begin{center}
   \begin{sc}
   \begin{tabular}{lcccccccc}
   \toprule
   & \multicolumn{2}{c}{ID Tasks} & \multicolumn{3}{c}{ID and Zero-Shot Tasks} & \multicolumn{3}{c}{OOD and Zero-Shot Tasks} \\ \cmidrule(lr){2-3} \cmidrule(lr){4-6} \cmidrule(lr){7-9}
   Model & RAVEN & PGM & O3-ID & VAP-ID & SVRT-ID & G1-set & VAP & SVRT \\
   \midrule
   ANP + Mono & 6.0 & 14.1 & 11.9 & 5.8 & 51.9 & 32.9 & 30.7 & 48.4 \\
   ANP + OCL & 8.3 & 12.3 & 13.3 & 8.5 & 49.4 & 29.3 & 30.4 & 49.8 \\
   ANP + Patch & 10.5 & 12.3 & 11.7 & 10.9 & 49.8 & 30.6 & \textbf{30.9} & 51.0 \\
   TF + Mono & 11.2 & 14.1 & 13.6 & 6.5 & 56.4 & 32.7 & 30.2 & 50.8 \\
   TF + OCL & 7.4 & 12.3 & 12.7 & 7.1 & 49.6 & \textbf{33.7} & 30.6 & 50.0 \\
   TF + Patch & 21.0 & 15.6 & \textbf{35.8} & 15.5 & 71.8 & 30.5 & 27.1 & 51.4 \\
   UCGS-T & \textbf{64.6} & \textbf{38.1} & 33.6 & \textbf{35.8} & \textbf{84.6} & 30.4 & 28.8 & \textbf{52.8} \\
   \midrule
   GPT-4o\textsuperscript{*} & 12.6 & 21.4 & 30.4 & 12.6 & 40.8 & \textbf{52.9} & 29.3 & \textbf{64.2} \\
   UCGS-T\textsuperscript{*} & \textbf{60.9} & \textbf{37.8} & \textbf{32.7} & \textbf{34.0} & \textbf{85.9} & 30.4 & \textbf{30.1} & 52.8 \\
   \midrule
   Random Guess & 12.5 & 12.5 & 11.1 & 12.5 & 50.0 & 22.5 & 25.0 & 50.0 \\
   \bottomrule
   \end{tabular}
   \end{sc}
   \end{center}
   \vskip -0.1in
\end{table*}

\textbf{Ablation Baselines.} We employ three backbone architectures and two conditional generative solvers to construct a set of ablation baselines. One of the backbones is the patch-based backbone (Patch) used in UCGS-T. Given that recent task-specific solvers \cite{mondal2023learning, webb2024systematic} have demonstrated strong performance using object-centric representations, we introduce an object-centric backbone (OCL) \cite{locatello2020object, yuan2023compositional}, which extracts slot representations to capture individual objects in panel images. We also incorporate a monolithic backbone (Mono) that encodes each panel image as a single representation. We introduce two baseline conditional generative solvers for predictability estimation. The Transformer-based solver (TF) maps context images to target images using a Transformer encoder-decoder architecture. The ANP-based solver (ANP) \cite{kim2019attentive} models the distribution over image panels using stochastic functions where context images are used to infer the functions, and target images are sampled from fixed locations of the functions. By combining the backbones and conditional generative solvers, we construct six ablation baselines. Further details of UCGS-T and these baselines are provided in Appendix \ref{sec:appendix-model}. UCGS-T and the ablation baselines are trained under a multi-task setting using all seven configurations of RAVEN and the neutral configuration of PGM. The models are evaluated on VAP, G1-set, and SVRT without retraining or fine-tuning to assess zero-shot reasoning ability. We conduct qualitative experiments by visualizing the target images generated by the models to illustrate and compare the answer-generation capabilities of UCGS-T and the ablation baselines.

\textbf{Task-specific Solvers and MLLMs.} Since UCGS-T is designed to handle both selective and generative tasks, we compare it against several task-specific solvers that support both answer selection and generation: PrAE \cite{zhang2021abstract}, NVSA \cite{hersche2023neuro}, GCA \cite{pekar2020generating}, ALANS \cite{zhang2021learning}, and RAISE \cite{shi2024towards}. To ensure consistency with the experimental setup used for UCGS-T and the ablation baselines, all task-specific solvers are trained and tested on each dataset separately, without supervision from rule or attribute annotations. In addition, we include GPT-4o \cite{achiam2023gpt}, a powerful general-purpose multimodal language model (MLLM), as a reference point for comparison. Task-specific prompts for GPT-4o are designed based on the prior benchmark study \cite{cao2024visual}.

\subsection{Performance on AVR Tasks}

The experiments evaluate model performance across four AVR tasks. We categorize the evaluation settings into the following three groups. \textit{In-Distribution (ID) Tasks} include problems that are seen during training, whose abstract rules and visual concepts are also encountered during training. \textit{In-Distribution Zero-Shot (ID-ZS) Tasks} are not used during training, but their abstract rules and visual concepts are present in the training data. \textit{Out-of-Distribution Zero-Shot (OOD-ZS) Tasks} are the most challenging tasks, which are entirely unseen during training and involve novel abstract rules or visual concepts. We compare UCGS-T with both ANP-based ablation baselines (ANP + Mono, ANP + OCL, ANP + Patch) and Transformer-based baselines (TF + Mono, TF + OCL, TF + Patch), as well as GPT-4o. For reference, we also include a random guess baseline.

\subsubsection{In-Distribution Tasks}

UCGS-T and the baselines are trained on RAVEN and PGM, whose test splits are treated as in-distribution (ID) tasks. While the training and testing samples of RAVEN and PGM are generated using the same sets of abstract rules and visual concepts, the rules and concepts are composed differently between the splits. The purpose of the ID tasks is to evaluate the ability to reason over the abstract rules and visual concepts that have been learned during training. As shown in Table \ref{tab:acc}, Transformer-based baselines outperform their ANP-based counterparts, with TF + Patch achieving the strongest results among them. UCGS-T further achieves 64.6\% accuracy on RAVEN and 38.1\% on PGM, demonstrating the abstract reasoning capabilities on ID tasks. In contrast, ANP-based baselines perform poorly, with accuracies close to random guessing. This suggests that ANP-based models struggle with compositional reasoning in AVR tasks such as RAVEN and PGM.

\subsubsection{In-Distribution Zero-Shot Tasks} 

We introduce ID-ZS tasks to evaluate the ability of models to generalize their reasoning capabilities to novel tasks. In this setting, all models are trained on RPM tasks and evaluated on other types of AVR tasks. They are required to solve problems from zero-shot tasks that contain in-distribution abstract rules and visual concepts. We construct three ID-ZS datasets, \textit{i.e.}, O3-ID, VAP-ID, and SVRT-ID, which follow the forms of O3, VAP, and SVRT, respectively. These datasets are built on the abstract rules and visual concepts of RAVEN. Across all ID-ZS tasks, models exhibit a noticeable drop in performance compared to ID tasks, highlighting the difficulty of transferring learned reasoning abilities to new task formats. Transformer-based baselines continue to outperform their ANP-based counterparts. UCGS-T achieves the best results overall, with accuracies of 33.6\% on O3-ID, 35.8\% on VAP-ID, and 84.6\% on SVRT-ID. These results suggest that UCGS-T is more effective at generalizing reasoning abilities from the training data to novel tasks. For example, UCGS-T can perform abstract reasoning on VAPs with six-image panels, despite having been trained solely on RPMs with nine-image panels.

\subsubsection{Out-of-Distribution Zero-Shot Tasks}

OOD-ZS tasks evaluate models on novel abstract rules and visual concepts that are entirely unseen during training, with test problems presented in forms that differ from those used in the training phase. We use the G1-set, VAP, and SVRT datasets to assess the out-of-distribution and zero-shot reasoning capabilities of models. As shown in Table \ref{tab:acc}, both UCGS-T and the baselines experience a substantial drop in performance compared to the ID tasks. On G1-set and VAP, UCGS-T and the baselines achieve accuracies around 30\%, which is 5–8\% higher than random guessing. This indicates that the UCGS framework enables a certain degree of generalization in abstract reasoning. UCGS-T slightly outperforms the other baselines on SVRT, a task that requires holistic pattern recognition based on spatial relationships among randomly generated abstract shapes. This requirement is similar to image-level visual concept recognition in datasets such as PGM and RAVEN (\textit{e.g.}, recognizing the number of objects in a scene). The results on SVRT suggest that UCGS-T may benefit from its panel encoder, which is specifically designed to extract high-level image abstractions. Furthermore, UCGS-T achieves a large accuracy improvement over the baselines in ID tasks, while the performance gains are relatively smaller in OOD tasks. We suppose that the unseen visual concepts and abstract rules in OOD tasks lead to a significant drop in accuracy for both UCGS-T and the baselines, thereby narrowing the performance gap that arises from different model architectures.

\subsubsection{Comparison with GPT-4o}

Table \ref{tab:acc} also presents the results of the general-purpose system GPT-4o. We randomly sampled subsets from the test sets of RAVEN, PGM, and VAP to evaluate both GPT-4o and UCGS-T. The results show that UCGS-T outperforms GPT-4o significantly on ID and ID-ZS tasks, while GPT-4o achieves better performance on OOD-ZS tasks. This may be attributed to GPT-4o's ability to recognize unseen visual concepts or abstract rules. However, this does not necessarily imply that GPT-4o possesses stronger zero-shot reasoning ability, as it is difficult to determine whether similar visual concepts or rules were encountered during its training. In contrast, UCGS-T is trained on a clearly defined and controlled set of samples, making its evaluation setting more transparent. Notably, GPT-4o shows weaker generalization on ID-ZS tasks, suggesting that UCGS-T is more effective at transferring its reasoning capabilities to novel task formats. In this experiment, GPT-4o is provided with task-specific prompts, following the setup in \cite{cao2024visual}. We also incorporate prior knowledge in the prompts, such as stating that an RPM problem is a 3 × 3 matrix, which is not provided to UCGS-T during either training or testing. Prior work \cite{cao2024visual} has shown that GPT-4o achieves a notable performance boost (approximately 20\%) when provided with language descriptions of candidate images. Although GPT-4o is a powerful general-purpose model, it is not specifically designed for AVR tasks. Its performance on RPM tasks may benefit from more advanced prompt engineering and chain-of-thoughts design. Appendix \ref{sec:appendix-gpt-4o} provides the prompts used in this experiment.

\subsubsection{Comparison with Task-Specific Solvers}

\begin{table}[t]
   \caption{\textbf{Selection accuracy (\%) of UCGS-T and the task-specific solvers on RAVEN and PGM.} The task-specific solvers are trained and tested separately on each dataset. They are trained without additional annotations to keep consistent with the experimental setup of UCGS-T and baselines. PrAE and ALANS only define the architecture to solve RAVEN. The codebase of NVSA is not applicable to PGM, and the reported accuracy on PGM is obtained by using additional annotation information. We only report their performance on RAVEN here.}
   \label{tab:acc-task-specific-solver}
   \begin{center}
   \begin{small}
   \begin{sc}
   \begin{tabular}{lcc}
   \toprule
   Model & RAVEN & PGM \\
   \midrule
   PrAE \cite{zhang2021abstract} & 13.6 & - \\
   NVSA \cite{hersche2023neuro} & 11.5 & - \\
   GCA \cite{pekar2020generating} & 37.3 & 31.7 \\
   ALANS \cite{zhang2021learning} & 50.1 & - \\
   RAISE \cite{shi2024towards} & 54.5 & 14.0 \\
   UCGS-T & \textbf{64.6} & \textbf{38.1} \\
   \midrule
   Random Guess & 12.5 & 12.5 \\
   \bottomrule
   \end{tabular}
   \end{sc}
   \end{small}
   \end{center}
   \vskip -0.1in
\end{table}

\begin{figure*}[htb]
   \vskip 0.2in
   \begin{center}
   \centerline{\includegraphics[width=\textwidth]{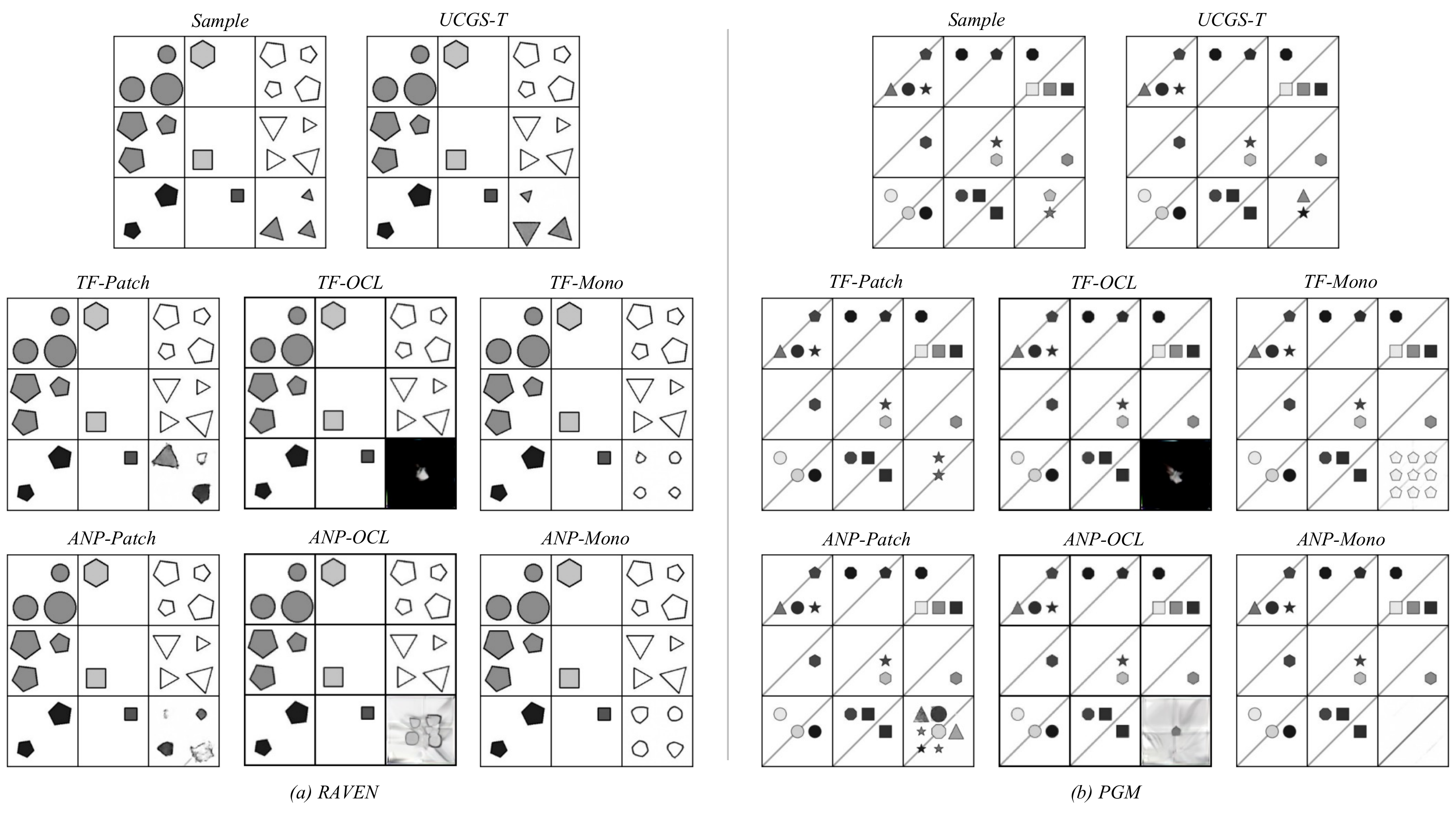}}
   \caption{\textbf{Comparison of generation results on RAVEN (left) and PGM (right).} In the visualization result of each model, the bottom-right position is the prediction and the remaining context images are ground truths. The figure illustrates qualitative differences between models, with errors and artifacts appearing in some results predicted by baselines. Note that there is noise in the problem panel, therefore the generated result of UCGS-T on PGM is correct though it is a little different from the ground truth.}
   \label{fig:exp-pred}
   \end{center}
   \vskip -0.2in
\end{figure*}

This experiment compares UCGS-T with classic generative task-specific RPM solvers. Table \ref{tab:acc-task-specific-solver} shows the experimental results where UCGS-T outperforms the task-specific solvers under the same experimental setup. The experimental results reveal that, without additional annotation information, the performance of generative task-specific solvers drops significantly. For example, PrAE achieves an accuracy of 65.0\% with rule supervision during training \cite{zhang2021abstract}, but its accuracy drops to 13.6\% in our experiment. The reasoning process of GCA is specifically designed for predicting the bottom-right image in an RPM matrix, which limits its ability to generalize to other AVR tasks. Additionally, models like PrAE, ALANS, and NVSA rely on predefined rules and explicit representations tailored to specific datasets, making it difficult for them to handle unseen or undefined visual concepts and abstract rules. UCGS-T does not rely on manually designed concepts or rules. It performs an independent reasoning process over each visual concept, inferring both the underlying visual concepts and abstract rules without the need for additional annotations.

\subsection{Answer Generation on RAVEN and PGM}

Figure \ref{fig:exp-pred} visualizes the generated answers produced by UCGS-T and the baselines. UCGS-T can generate rule-compliant answers from the context on RAVEN and PGM. While TF-Patch is capable of generating clear and reasonable outputs on the PGM example, it fails to produce the correct answer for the RAVEN problem. Other baselines often generate images containing incorrect visual concepts. For instance, the solutions predicted by TF-Mono and ANP-Mono on RAVEN display scenes with four objects, whereas the correct answer should contain three objects. Models using the object-centric backbone generally struggle to generate high-quality outputs where the mismatch between predicted target slots and the true scene structure significantly degrades the pixel reconstruction quality. We observe that UCGS-T and TF-Patch are better at capturing diversity in answer generation. Since RAVEN and PGM introduce a degree of randomness or noise in data generation, UCGS-T’s outputs may differ slightly from the ground truth but still conform to the task rules. For example, in Figure \ref{fig:exp-pred}a, the answer should contain three objects, but their positions within the panel may vary without affecting correctness.

\section{Conclusion and Limitations}

We propose UCGS, a unified AVR framework to solve multiple AVR tasks using a single conditional generative network. We instantiate UCGS with a model that infers visual concepts from image patches to perform reasoning on the problem panel. Experimental results demonstrate that UCGS not only exhibits abstract reasoning ability on ID tasks but also zero-shot generalization, enabling it to solve unseen tasks during testing, even when those tasks involve OOD visual concepts and abstract rules.

\textbf{Limitations.} While UCGS-T outperforms the generative task-specific RPM solvers, its accuracy is still lower than state-of-the-art selective solvers. Reducing the performance gap remains an important direction for future work. We believe this paper provides a unified perspective for solving AVR tasks. This work does not address AVR tasks involving real-world scenes. Real-world problems introduce more complex visual concepts and abstract rules, which may require more powerful image tokenizers and decoders. Effectively discovering and utilizing such rules in realistic settings remains a challenge for future investigation.

\section*{Impact Statement}

This paper presents work whose goal is to construct unified abstract visual reasoning solvers. We feel that there are no potential societal impacts that must be specifically highlighted and discussed here.

\section*{Acknowledgements}

This work was supported by the National Natural Science Foundation of China (No.62176060) and the Program for Professor of Special Appointment (Eastern Scholar) at Shanghai Institutions of Higher Learning.


\bibliography{main}
\bibliographystyle{icml2025}

\newpage
\appendix
\onecolumn

\section{Datasets}
\label{sec:appendix-dataset}

In this section, we will introduce the datasets used for model training and testing. Tables \ref{tab:appendix-benchmarks} and \ref{tab:appendix-benchmarks-test} introduce the training, validation, and test splits of the datasets.

\begin{table}[!htb]
   \caption{\textbf{The ID tasks used to train and evaluate the baselines, task-specific solvers and UCGS-T.} \#Samples = number of samples per dataset, \#Images = number of images per problem, \#Candidates = number of candidate per problem.}
   \label{tab:appendix-benchmarks}
   \centering
   \begin{tabular}{l cccccc}
       \toprule
       Dataset & \multicolumn{3}{c}{RAVEN} & \multicolumn{3}{c}{PGM} \\
       \midrule
       Split & Train & Valid & Test & Train & Valid & Test \\
       \midrule
       \#Samples & 42K & 14K & 14K & 1.4M & 5K & 200K \\
       \midrule
       \#Images & \multicolumn{3}{c}{9} & \multicolumn{3}{c}{9} \\
       \midrule
       \#Candidates & \multicolumn{3}{c}{8} & \multicolumn{3}{c}{8} \\
       \midrule
       Image Size & \multicolumn{3}{c}{128$\times$128} & \multicolumn{3}{c}{128$\times$128} \\
       \bottomrule
   \end{tabular}
\end{table}

\begin{table}[!htb]
   \caption{\textbf{The ID-ZS and OOD-ZS tasks used to evaluate the baselines and UCGS-T.} \#Samples = number of samples per dataset, \#Images = number of images per problem, \#Candidates = number of candidate per problem.}
   \label{tab:appendix-benchmarks-test}
   \centering
   \begin{tabular}{lcccccc}
       \toprule
       Dataset & G1-set & VAP & SVRT & O3-ID & VAP-ID & SVRT-ID \\
       \midrule
       Split & Test & Test & Test & Test & Test & Test \\
       \midrule
       \#Samples & 1K & 200K & 253 & 14K & 14K & 14K \\
       \midrule
       \#Images & 4$\sim$5 & 6 & 8 & 5 & 6 & 8 \\
       \midrule
       \#Candidates & NA & 4 & 2 & NA & 8 & 2 \\
       \midrule
       Image Size & 128$\times$128 & 128$\times$128 & 128$\times$128 & 128$\times$128 & 128$\times$128 & 128$\times$128 \\
       \bottomrule
   \end{tabular}
\end{table}

\begin{table}[!htb]
   \caption{\textbf{The subsets used to evaluate GPT-4o and UCGS-T.} The datasets noted with * are subsets sampled from the corresponding complete datasets. \#Samples = number of samples per dataset, \#Images = number of images per problem, \#Candidates = number of candidate per problem.}
   \label{tab:appendix-benchmarks-test-subset}
   \centering
   \begin{tabular}{l cccccccc}
       \toprule
       Dataset & RAVEN* & PGM* & G1-set & VAP* & SVRT & O3-ID & VAP-ID & SVRT-ID \\
       \midrule
       Split & Test & Test & Test & Test & Test & Test & Test & Test \\
       \midrule
       \#Samples & 700 & 1K & 1K & 1K & 253 & 700 & 700 & 700 \\
       \midrule
       \#Images &9 & 9 & 4$\sim$5 & 6 & 8 & 5 & 6 & 8 \\
       \midrule
       \#Candidates & 8 & 8 & NA & 4 & 2 & NA & 8 & 2 \\
       \midrule
       Image Size & 128$\times$128 & 128$\times$128 & 128$\times$128 & 128$\times$128 & 128$\times$128 & 128$\times$128 & 128$\times$128 & 128$\times$128 \\
       \bottomrule
   \end{tabular}
\end{table}

\subsection{ID Tasks}

\textbf{RAVEN} \citeappendix{zhang2019raven}. The RAVEN dataset is inspired by the widely used RPM tests. RAVEN provides a structured and systematically generated set of problems with clearly defined rules and visual concepts. Each RAVEN problem consists of a nine-image matrix ($3 \times 3$ grid) where the bottom-right image is removed. The task is to infer the missing image from a set of candidate choices based on the underlying abstract rules. RAVEN contains seven configurations, each containing 10K problems with a specific scene structure. The configurations incorporate five concepts that define problem variations: \textit{Shape} (\textit{e.g.}, triangle, square, pentagon, \textit{etc.}), \textit{Size} (relative scale of objects), \textit{Color} (\textit{e.g.}, white, gray, black, \textit{etc.}) and \textit{Position/Number} (spatial arrangement and number of objects within each image). Each problem follows one or more concept-specific rules that govern changes in these visual concepts: \textit{Constant} (the concept remains unchanged), \textit{Progression} (the concept changes sequentially), \textit{Arithmetic} (the concept follows a mathematical relationship) and \textit{Distribution-of-Three} (the concept has three permutable values). We use problems from all seven configurations of RAVEN to train and test the models.

\textbf{PGM} \citeappendix{barrett2018measuring}. The PGM dataset is a large-scale RPM-like dataset. The problem structure of PGM problems is similar to RAVEN. A PGM problem also consists of a nine-image matrix, with the bottom-right image removed. The models infer the missing image based on the abstract rules of the given context and choose the correct answer from eight candidate images. The dataset contains 1.4M unique problems. The visual concepts are \textit{shape} and \textit{line} of the \textit{color}, \textit{number}, \textit{position}, \textit{size} and \textit{type}. The visual concepts follow systematically defined rules, which have five categories: \textit{progression}, \textit{XOR}, \textit{OR}, \textit{AND}, and \textit{consistent union}. The PGM dataset provides eight generalization regimes, and we use the neutral regime for model training and evaluation.

\subsection{ID-ZS Tasks}

\begin{figure*}[!htb]
   \vskip 0.2in
   \begin{center}
   \centerline{\includegraphics[width=0.85\textwidth]{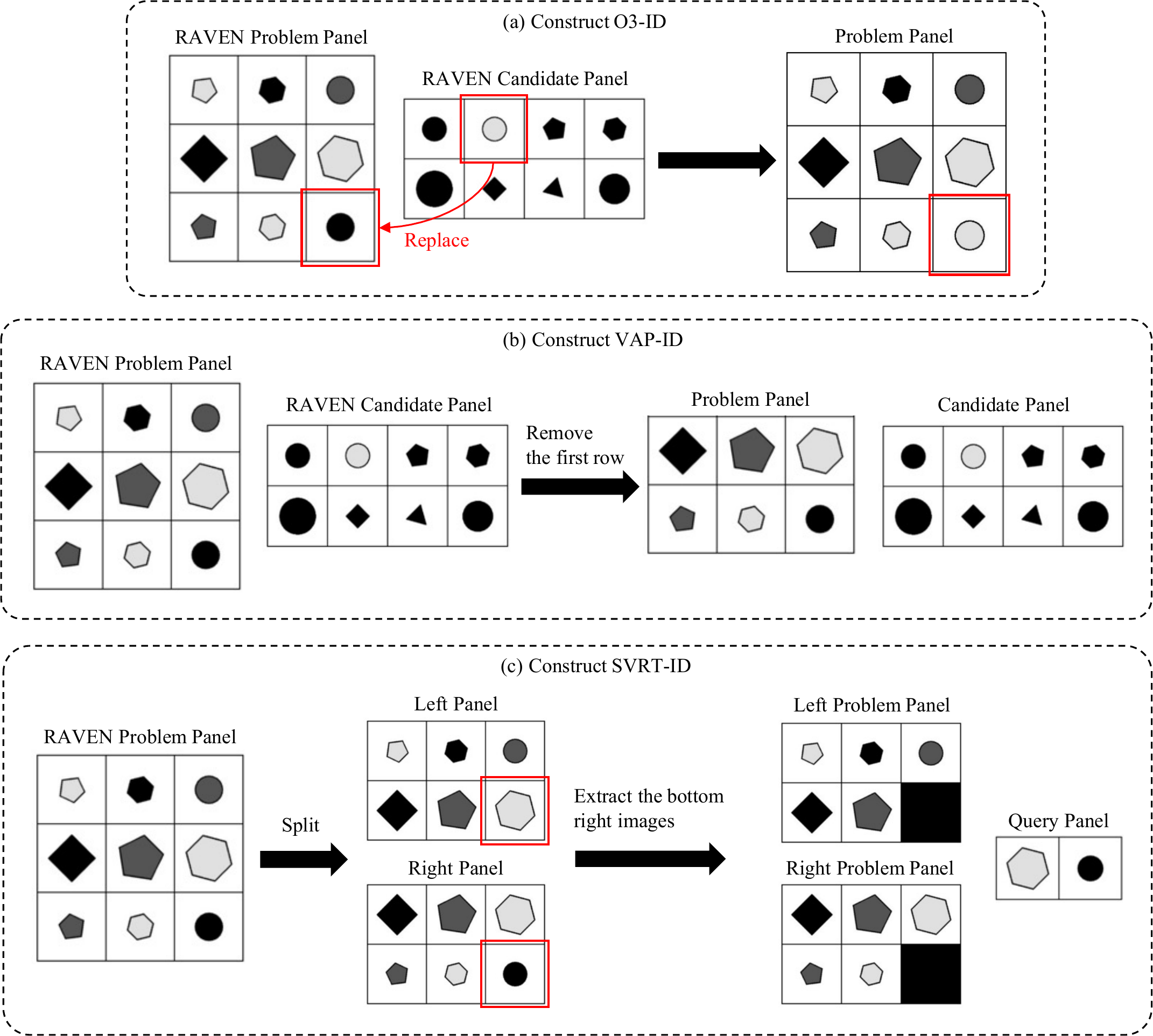}}
   \caption{Construction of the ID tasks.}
   \label{fig:id-task-construct-appendix}
   \end{center}
   \vskip -0.2in
\end{figure*}

\textbf{O3-ID}. As shown in Figure \ref{fig:id-task-construct-appendix}a, we replace the bottom-right image of each $3 \times 3$ image panel with a randomly selected distractor from the candidate panel to construct odd-one-out tests from RAVEN. This process transforms the bottom-right image into a rule-breaking image by modifying a specific visual concept of the correct image. The constructed O3-ID dataset shares the same visual concepts and abstract rules as RAVEN but follows the form of odd-one-out, where the models must identify the image breaking the rule of the problem panel.

\textbf{VAP-ID}. The construction of the VAP-ID task is relatively straightforward in Figure \ref{fig:id-task-construct-appendix}b. Unlike RAVEN, a VAP presents a $2 \times 3$ image matrix as its problem panel, and the abstract rules can change different visual concepts in different rows. Here, we construct VAP problems using RAVEN's design of analogy. Specifically, we remove the final row and retain the first two rows (\textit{i.e.}, the first six images) of each RAVEN problem panel to generate a problem of VAP-ID. In this way, VAP-ID problems preserve the same in-distribution visual concepts and abstract rules as RAVEN, while adopting a different structure of problem panels. These problems are designed to assess whether models can still parse the learned abstract rules and complete the task when the structure of problem panels is changed.

\textbf{SVRT-ID}. In Figure \ref{fig:id-task-construct-appendix}c, we split each RAVEN problem panel into two parts to construct SVRT-style problems. The first two rows form the left panel, and the last two rows form the right panel. We then extract the bottom-right image from both panels to form the query panel, while the remaining images are kept as the corresponding problem panels. In this way of construction, the left and right problem panels exhibit different abstract rules of the same type. Therefore, the association between the query images and the problem panels is typically unambiguous.

\subsection{OOD-ZS Tasks}

\textbf{G1-set} \citeappendix{mandziuk2019deepiq}. The G1-set dataset evaluates the AVR ability of models through odd-one-out problems. Unlike RPMs, where the task is to infer missing images, an odd-one-out problem requires the models to recognize the oddest figure in a panel based on relational differences in visual concepts. In G1-set, the odd figure can differ on the visual concepts \textit{size}, \textit{shape}, \textit{shading}, \textit{rotation}, \textit{etc}. To ensure the problems are well-defined, the G1-set problems are constructed using a strict rule. In each problem with $N$ figures, there is one subset of $N-1$ figures that share a common set of 1 $\sim$ 3 features. The remaining figure is the odd image that lacks this commonality. The difficulty of problems is controlled by the number of figures and the number of shared features.

\textbf{VAP} \citeappendix{hilllearning}. The VAP dataset evaluates the analogy-making ability of models. Inspired by previous human and machine reasoning tasks like RPMs, VAP requires the models to identify abstract rules and apply them across different visual concepts. Each problem panel consists of a source sequence and a target sequence. The source sequence has three images where an abstract rule is instantiated. The target sequence has two images where the removed position must be completed according to the rule of the source sequence. The candidate panel provides four candidate answer images, including one correct answer and three distractors. The goal is to select the correct answer that best completes the target sequence by analogy with the source sequence. The core challenge in VAP is to apply learned rules across different visual domains. VAP defines four logical or mathematical abstract rules: \textit{XOR}, \textit{OR}, \textit{AND} and \textit{progression}. The abstract rules can be instantiated in seven different visual concepts: \textit{line type}, \textit{line color}, \textit{shape type}, \textit{shape color}, \textit{shape size}, \textit{shape quantity} and \textit{shape position}. Each visual concept contains 10 possible values to increase the complexity of problems.

\textbf{SVRT} \citeappendix{fleuret2011comparing}. The SVRT dataset evaluates the relational visual reasoning ability of models. Unlike traditional image classification tasks that rely on texture, SVRT problems require holistic pattern recognition based on spatial relationships among randomly generated abstract shapes. Each of the 23 problems in the dataset requires assigning a query image to one of two panels, where the distinction is defined by an abstract rule governing the global arrangement of parts. Importantly, these rules are not based on the appearance, location, or topology of individual parts but on how multiple parts interact as a whole. The problems in SVRT involve relational concepts rather than low-level features. Some examples of reasoning principles used in the dataset include: \textit{proximity} (are two or more shapes close to each other), \textit{similarity} (do multiple shapes share the same size/form), \textit{symmetry} (are the parts arranged symmetrically), \textit{topology} (are certain shapes enclosed within others), \textit{counting} (does the image contain an even or odd number of objects) and \textit{identity relations} (do two objects have the same shape or orientation). For example, one problem might require distinguishing two identical shapes vs. two different images.

\section{Details of Models}
\label{sec:appendix-model}

\subsection{UCGS-T}
\label{sec:appendix-model-ucgs}

This section describes the architectures of learnable networks in UCGS-T and the choice of hyperparameters. We introduce the networks in the order of image encoder, image decoder, patch encoder, panel encoder and patch decoder.
\begin{itemize}[leftmargin=0.5cm]
  \item \textbf{Image Encoder}:
  \begin{itemize} 
      \item 4 $\times$ 4 Conv, stride 2, padding 1, 64, ReLU
      \item 4 $\times$ 4 Conv, stride 2, padding 1, 64, ReLU
      \item 4 $\times$ 4 Conv, stride 2, padding 1, 128, ReLU
      \item 3 $\times$ 3 Conv, stride 1, padding 1, 128
      \item ResBlock, hidden 32, 128
      \item ResBlock, hidden 32, 128, ReLU
      \item 4 $\times$ 4 Conv, stride 2, padding 1, 64, ReLU
      \item 4 $\times$ 4 Conv, stride 2, padding 1, 128, ReLU
      \item 3 $\times$ 3 Conv, stride 1, padding 1, 128
      \item ResBlock, hidden 32, 128
      \item ResBlock, hidden 32, 128, ReLU
      \item 1 $\times$ 1 Conv, stride 1, 64
  \end{itemize}
  \item \textbf{Image Decoder}:
  \begin{itemize} 
      \item 3 $\times$ 3 Conv, stride 1, padding 1, 128, ReLU
      \item ResBlock, hidden 32, 128
      \item ResBlock, hidden 32, 128, ReLU
      \item 4 $\times$ 4 Deconv, stride 2, padding 1, 64, ReLU
      \item 4 $\times$ 4 Deconv, stride 2, padding 1, 128
      \item ResBlock, hidden 32, 128
      \item ResBlock, hidden 32, 128, ReLU
      \item 4 $\times$ 4 Deconv, stride 2, padding 1, 64, ReLU
      \item 4 $\times$ 4 Deconv, stride 2, padding 1, 64, ReLU
      \item 4 $\times$ 4 Deconv, stride 2, padding 1, 3
  \end{itemize}
  \item \textbf{Patch Encoder}:
  The linear network to encode position information is
  \begin{itemize} 
      \item LayerNorm, 128
      \item Fully Connected, 128, ReLU
      \item Fully Connected, 128
  \end{itemize}
  The patch encoder is a 12-layer Transformer decoder where the hidden size is 128 and the number of attention heads is 8.
  \item \textbf{Panel Encoder}:
  The panel encoder is a 12-layer Transformer decoder with the hidden size 128 and the number of attention heads 8. We use a linear layer to convert the input keys and values to hidden vectors. Another linear layer is introduced to convert the input queries to hidden vectors.
  \item \textbf{Patch Decoder}:
  The panel encoder is a 12-layer Transformer decoder with the hidden size 128 and the number of attention heads 8. We use a linear layer to convert the input keys and values to hidden vectors. Another linear layer is introduced to convert the input queries to hidden vectors. The output of the Transformer decoder is projected to the index of the discrete vector in the codebook $\boldsymbol{e}$ by a linear layer without the bias parameter.
\end{itemize}
The image encoder and image decoder are first trained by setting the learning rate as $4 \times 10^{-4}$ and batch size as 64, referring to the original configuration in VQVAE \citeappendix{van2017neural}. Then we freeze the parameters of the image encoder and image decoder, training the remaining part by setting the learning rate as $3 \times 10^{-4}$, batch size as 128. We monitor the performance of UCGS-T on the validation set after each training epoch and save the checkpoint with the best validation accuracy. The parameters are updated by Adam optimizer \citeappendix{kingma2014adam}.

\subsection{Ablation Baselines}
\label{sec:appendix-model-baselines}

Besides the patch-based backbone of UCGS-T, we introduce another two backbones. The object-centric backbone can distinguish objects in scenes and extract representations for individual objects. We adopt the encoder of STSN \citeappendix{mondal2023learning} as the object-centric backbone, and train the backbone using the official 
code\footnote{https://github.com/Shanka123/STSN/tree/main}. The monolithic backbone encodes a scene into one representation. The encoder and decoder of the monolithic backbone consist of continuous convolutional layers that map the image into a single representation of size 512. We call the slots and representations extracted from the object-centric and monolithic backbones as patches for convenience. We introduce two types of baseline conditional generative solvers: the Transformer-based solver \citeappendix{vaswani2017attention} and the ANP-based solver \citeappendix{kim2019attentive}. All the parameters are updated by Adam optimizer \citeappendix{kingma2014adam}.

\textbf{Transformer-based Solver} \citeappendix{vaswani2017attention}. The Transformer-based conditional generative network inputs all context patches $\boldsymbol{Z}^c = \{\boldsymbol{Z}^c_i | i=1,\dots,N_c\}$ to a Transformer encoder. The output of the Transformer decoder is the key and query of a Transformer decoder, and the embeddings of the target positions $\{\text{PE}_i | i=1,\dots,N_t\}$ are regarded as the query of the Transformer decoder. The generative process is
\begin{equation}
   \begin{aligned}
      \boldsymbol{h}^{kv}_{i} &= f_{\text{KV}}\left(\boldsymbol{Z}^c_i\right), &\quad  i=1,...,N_c, \\
      \boldsymbol{\hat{h}}^{kv} &= \text{TFEncoder}\left(\boldsymbol{h}^{kv}\right), &\quad\\
      \boldsymbol{h}^{q}_{i} &= f_{\text{Q}}\left(\text{PE}_{i}\right), &\quad  i=1,\dots,N_t, \\
      \boldsymbol{\hat{h}}^{q} &= \text{TFDecoder}\left(\boldsymbol{h}^{q}, \boldsymbol{\hat{h}}^{kv}\right), &\quad \\
      \boldsymbol{Z}^t_i &= f_{\text{O}}\left(\boldsymbol{\hat{h}}^q_i\right), &\quad i=1,\dots,N_t.
   \end{aligned}
\end{equation}
In the generative process, $f_{\text{KV}}$, $f_{\text{Q}}$, and $f_{\text{O}}$ are single-layer feedforward networks. We set the hidden size of the Transformer encoder and decoder as $512$, the number of Transformer layers as 12, and the number of attention heads as $8$.

\textbf{ANP-based Solver} \citeappendix{kim2019attentive}. The ANP conditional generative baseline regards the mapping from context patches to target patches as stochastic functions. The context slots are used to construct the context of stochastic functions, and the model will sample a function from the posterior to map the embeddings of target positions into target patches. We set the size of the global latent as $512$, the number of attention layers as 12, and the number of attention heads as $8$, and other hyperparameters and the model architecture follow the 2D regression configuration in \citeappendix{kim2019attentive}.

\subsection{Task-specific Solvers}
\label{sec:appendix-task-specific-solvers}

We use the official codes of PrAE\footnote{https://github.com/WellyZhang/PrAE}, NVSA\footnote{https://github.com/IBM/neuro-vector-symbolic-architectures-raven}, GCA\footnote{https://github.com/nivPekar/Generating-Correct-Answers-for-Progressive-Matrices-Intelligence-Tests}, ALANS\footnote{https://github.com/WellyZhang/ALANS} and RAISE\footnote{https://github.com/FudanVI/generative-abstract-reasoning/tree/main/raise} in the experiments. All task-specific solvers are trained without additional annotation information. To stabilize the training process, ALANS is trained based on the pretrained checkpoint provided by the authors.

\subsection{GPT-4o}
\label{sec:appendix-gpt-4o}

\begin{figure*}[ht]
   \vskip 0.2in
   \begin{center}
   \centerline{\includegraphics[width=0.9\textwidth]{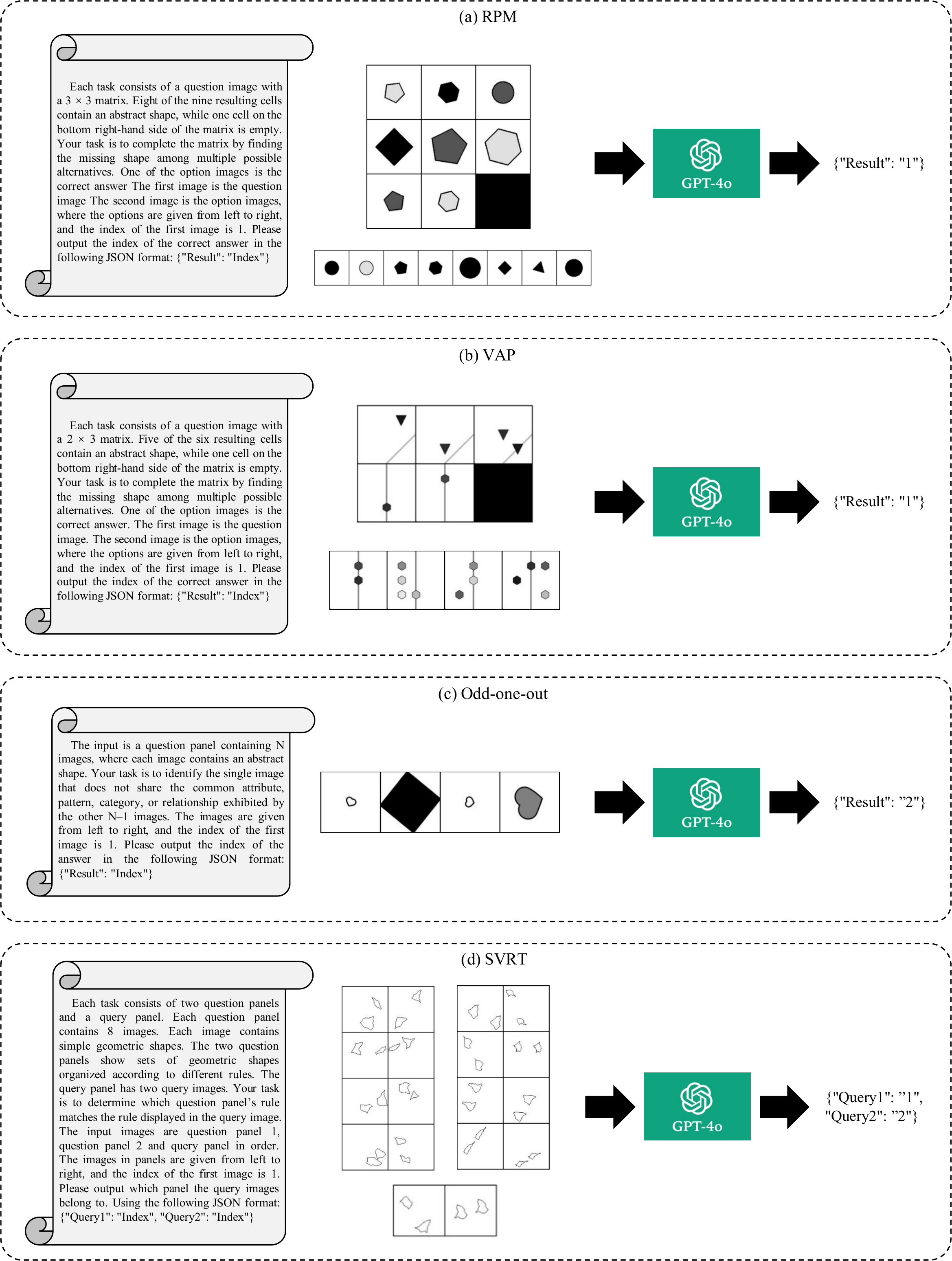}}
   \caption{Task-specific prompts of GPT-4o.}
   \label{fig:gpt-4o-prompt-appendix}
   \end{center}
   \vskip -0.2in
\end{figure*}

We design task-specific prompts for GPT-4o, which are illustrated in Figure \ref{fig:gpt-4o-prompt-appendix}. In the prompts, we will provide the information about the problem structures, \textit{e.g.}, the problem panel of PGM is a $3 \times 3$ matrix and the problem panel of VAP is a $2 \times 3$ matrix. Each panel is regarded as an input image of GPT-4o.

\subsection{Computational Resource}

All the models are trained on a single 24GB NVIDIA GeForce RTX 4090 GPU and implemented using PyTorch \citeappendix{paszke2019pytorch}.

\section{Additional Experimental Results}
\label{sec:appendix-exp}

\subsection{Answer Generation}

\begin{figure*}[ht]
   \vskip 0.2in
   \begin{center}
   \centerline{\includegraphics[width=0.9\textwidth]{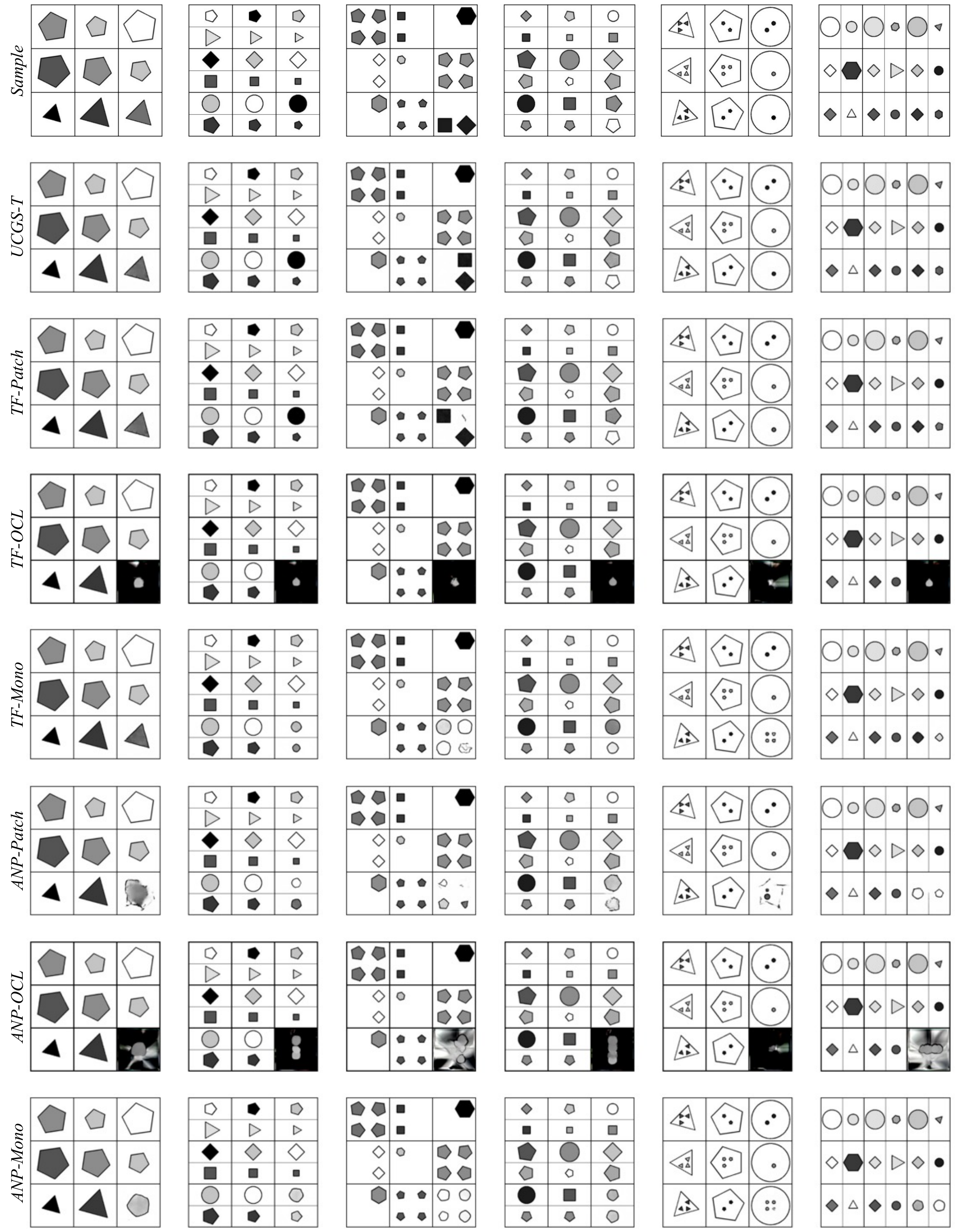}}
   \caption{Comparison of generation results on RAVEN.}
   \label{fig:exp-pred-appendix}
   \end{center}
   \vskip -0.2in
\end{figure*}

\begin{figure*}[ht]
   \vskip 0.2in
   \begin{center}
   \centerline{\includegraphics[width=0.9\textwidth]{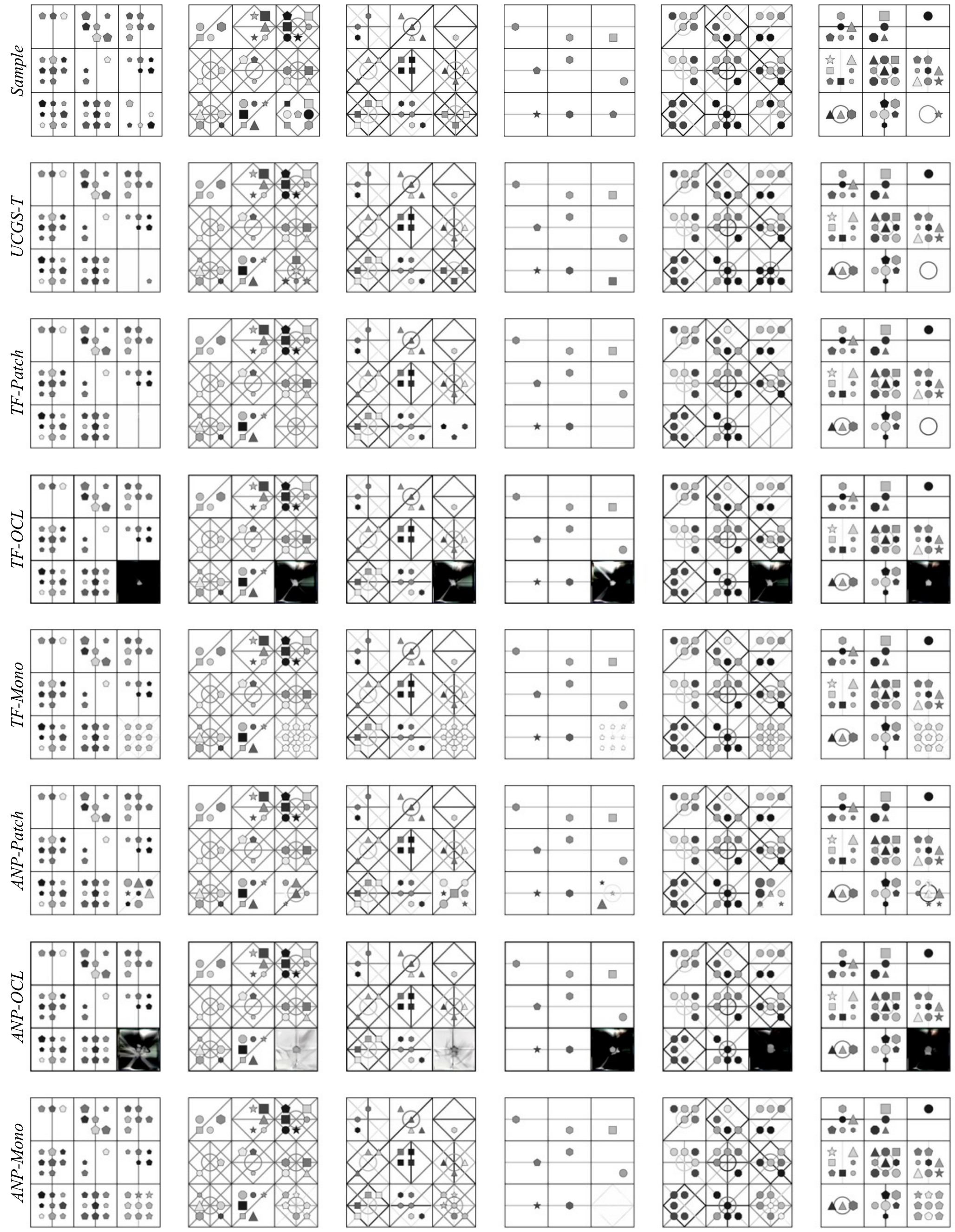}}
   \caption{Comparison of generation results on PGM.}
   \label{fig:exp-pred-appendix-2}
   \end{center}
   \vskip -0.2in
\end{figure*}

We provide additional examples of answer generation. Figures \ref{fig:exp-pred-appendix} and \ref{fig:exp-pred-appendix-2} show the generated results for various AVR problems on the RAVEN and PGM datasets, respectively. Overall, the generated results on RAVEN are more accurate, corresponding to the selection accuracy, primarily because the problems of RAVEN contain less noise, which has been discussed in previous works. UCGS-T achieves the best generation performance on both RAVEN and PGM. UCGS-T can handle the uncertainty caused by noise effectively. For example, in the 4th sample of PGM, the shape and position of the generated object differ from the ground truth. TF-Patch generates relatively accurate results on RAVEN, but it tends to produce rule-violating answers on PGM. Overall, TF-Patch generates fewer artifacts in the images. Baselines using the object-centric backbone (TF-OCL and ANP-OCL) struggle to generate clear images, often producing large black regions. Baselines based on the monolithic backbone tend to produce samples that deviate significantly from the ground truths.

\bibliographyappendix{sup}
\bibliographystyleappendix{icml2025}

\end{document}